\newcommand{\jian}[1]{{\small\color{brown}{\bf Jian: #1}}}
\newcommand{\yinglong}[1]{{\small\color{teal}{\bf Yinglong: #1}}}
\newcommand{\hide}[1]{}
\newcommand{\rawlsgcn}{\textsc{RawlsGCN}}
\DeclareMathOperator*{\argmin}{argmin}
\definecolor{Gray}{gray}{0.75}
\newcolumntype{a}{>{\columncolor{Gray}}c}
\newtheorem{prob}{Problem}
\newtheorem{lm}{Lemma}
\newtheorem{thm}{Theorem}
\newtheorem{defn}{Definition}
  \providecommand\BibTeX{{%
    \normalfont B\kern-0.5em{\scshape i\kern-0.25em b}\kern-0.8em\TeX}}}
\begin{document}
\title{\rawlsgcn: Towards Rawlsian Difference Principle on Graph Convolutional Network}

\author{Jian Kang$^1$,~~~Yan Zhu$^2$, ~~~Yinglong Xia$^2$, ~~~Jiebo Luo$^{2,3}$, and Hanghang Tong$^1$}
\affiliation{
	\institution{
		$^1$University of Illinois at Urbana-Champaign, \{jiank2, htong\}@illinois.edu; \\
		$^2$Facebook AI, \{yzhu, yxia\}@fb.com; \\
		$^3$University of Rochester, \{jluo\}@cs.rochester.edu;
	}
	\country{}
}
\renewcommand{\shortauthors}{Kang et al.}

\begin{abstract}
Graph Convolutional Network (GCN) plays pivotal roles in many real-world applications. Despite the  successes of GCN deployment, GCN often exhibits performance disparity with respect to node degrees, resulting in worse predictive accuracy for low-degree nodes. We formulate the problem of mitigating the degree-related performance disparity in GCN from the perspective of the Rawlsian difference principle, which is originated from the theory of distributive justice. Mathematically, we aim to balance the utility between low-degree nodes and high-degree nodes while minimizing the task-specific loss. Specifically, we reveal the root cause of this degree-related unfairness by analyzing the gradients of weight matrices in GCN. Guided by the gradients of weight matrices, we further propose a pre-processing method \rawlsgcn-Graph and an in-processing method \rawlsgcn-Grad that achieves fair predictive accuracy in low-degree nodes without modification on the GCN architecture or introduction of additional parameters. Extensive experiments on real-world graphs demonstrate the effectiveness of our proposed \rawlsgcn\ methods in significantly reducing degree-related bias while retaining comparable overall performance.
\end{abstract}

\begin{CCSXML}
    <ccs2012>
        <concept>
        <concept_id>10002951.10003227.10003351</concept_id>
        <concept_desc>Information systems~Data mining</concept_desc>
        <concept_significance>500</concept_significance>
        </concept>
    </ccs2012>
\end{CCSXML}

\ccsdesc[500]{Information systems~Data mining}

\keywords{Graph neural networks, algorithmic fairness, distributive justice}

\maketitle

\section{Introduction}\label{sec:intro}
Graph structured data naturally appears in many real-world scenarios, ranging from social network analysis~\cite{peng2016social}, drug discovery~\cite{chen2019graph}, financial fraud detection~\cite{zhang2017hidden} to traffic prediction~\cite{derrow2021eta}, recommendation~\cite{wang2019neural} and many more. The success of deep learning on grid-like data has inspired many graph neural networks in recent years. Among them, Graph Convolutional Network (GCN)~\cite{kipf2017semi} is one of the most fundamental and widely used ones, often achieving superior performance in a variety of tasks and applications. 

Despite their strong expressive power in node/graph representation learning, recent studies show that GCN tends to under-represent nodes with low degrees~\cite{tang2020investigating}, which could result in high loss values and low predictive accuracy in many tasks and applications. As shown in Figure~\ref{fig:intro_example}, it is clear that low-degree nodes suffer from higher average loss and lower average accuracy in semi-supervised node classification. Such a performance disparity w.r.t. degrees is even more alarming, given that node degrees of real-world graphs often follow a long-tailed power-law distribution which means that a large fraction of nodes have low node degrees. In other words, the overall performance of GCN might be primarily beneficial to a few high-degree nodes (e.g., celebrities on a social media platform) but biased against a large number of low-degree nodes (e.g., grassroot users on the same social media platform). 

\begin{figure}
    \centering
    \includegraphics[width=.43\textwidth]{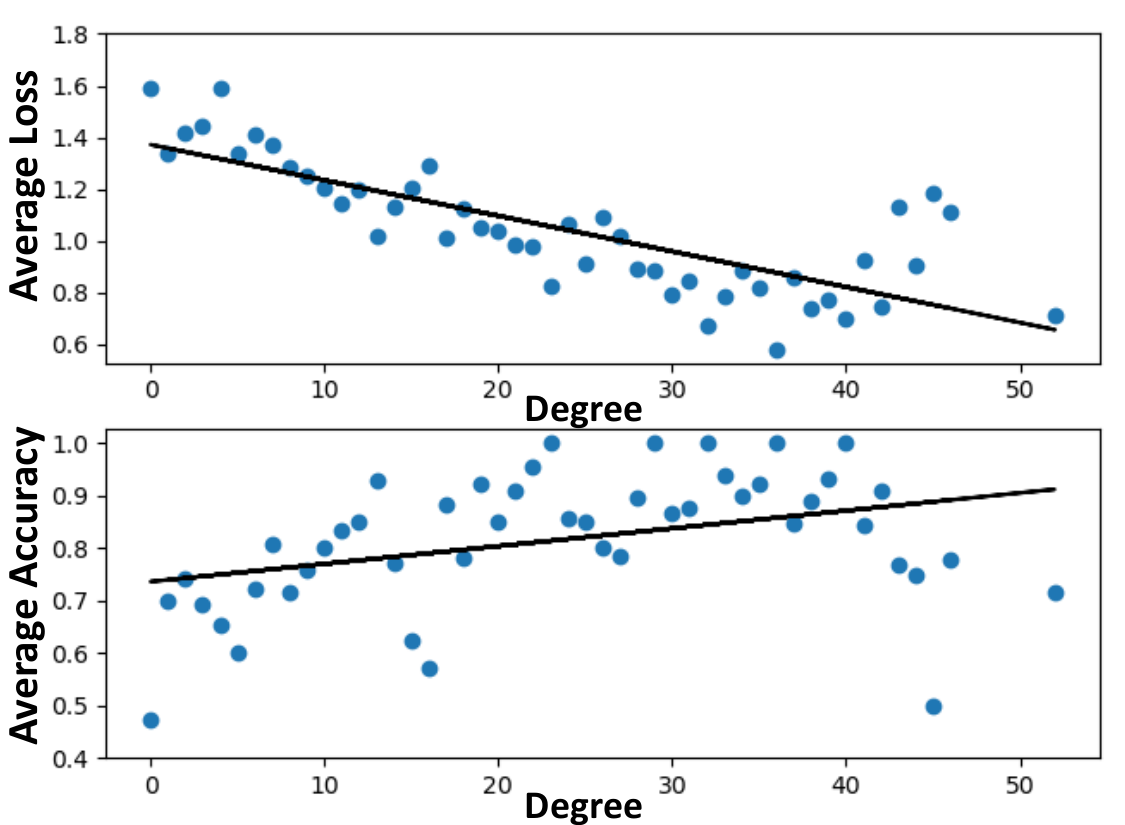}
    \vspace{-2mm}
    \caption{An illustrative example of the underrepresentation of low-degree nodes in semi-supervised node classification. A 2-layer GCN is trained on the Amazon-Photo dataset. Blue dots refers to the average loss and average accuracy of a specific degree group (i.e., the set of nodes with the same degree) in the top and bottom figures, respectively. Black lines are the regression lines of the blue dots in each figure. For visualization clarity, we only consider the degree groups that contain more than five nodes.}
    \label{fig:intro_example}
    \vspace{-8mm}
\end{figure}

To date, only a few efforts have been made to improve the performance for low-degree nodes. For example, DEMO-Net~\cite{wu2019net} randomly initializes degree-specific weight parameters 
in order to preserve the neighborhood structure for low-degree nodes. SL-DSGCN~\cite{tang2020investigating} introduces a 
degree-specific weight generator 
based on recurrent neural network (RNN) and a semi-supervised learning module to provide pseudo labels for low-degree nodes. Recently, Tail-GNN~\cite{liu2021tail} proposes a novel neighborhood translation mechanism to infer the missing local context information of low-degree nodes. However, the fundamental cause of degree-related unfairness in GCN largely remains unknown, which in turn prevents us from mitigating such unfairness from its root. Furthermore, existing works introduce either additional degree-specific weight parameters~\cite{wu2019net, tang2020investigating} or additional operation on node representations (e.g., forged node generator, neighborhood translation)~\cite{liu2021tail}, which significantly increase the computational cost in both time and space and thus hinder the scalability of these models.
\hide{
\begin{compactitem}
    \item [\textbf{L1.}] There is limited understanding on the root cause of degree-related unfairness in GCN. Without more understanding on the root cause, existing works might not effectively resolve the degree-related unfairness. As a result, the unfairness is hardly mitigated from the origin. 
    \item [\textbf{L2.}] In order to improve the performance of low-degree nodes, existing works introduce either additional degree-specific weight parameters~\cite{wu2019net, tang2020investigating} or additional operation on node representations (e.g., forged node generator, neighborhood translation)~\cite{liu2021tail}, which are essentially new graph neural network architectures with additional number of parameters to be learned. Moreover, the newly introduced weight parameters or operations significantly increases the computational cost in both time and space, which hinders the scalability of these models. \textit{How to mitigate the degree-related unfairness on vanilla GCN without changing its architecture or introducing additional parameters} is a question that remains nascent. \yinglong{Without going through the rest part of the paper, it can be a bit confusing why do we really care if the GCN is a vanilla version or not for mitigating the unfairness? The narrative can be improved.} \jian{I added one more sentence to say that these additional weight/operations will increase the computational cost.}
\end{compactitem}
}

In order to tackle these limitations,
we introduce the Rawlsian difference principle~\cite{rawls1971theory} to mitigate degree-related unfairness in GCN. As one of the earliest definitions of fairness from the theory of distributive justice
~\cite{rawls1971theory}, the Rawlsian difference principle aims to maximize the welfare of the least fortunate group and achieves stability when the worst-off group seeks to preserve its status quo. In the context of GCN, it requires a GCN model to have balanced performance among the groups of nodes with the same degree when the Rawlsian difference principle is in its stability. Given its essential role in training the GCN through backpropagation, we focus our analysis on the gradients of weight matrices of GCN. In particular, we establish the mathematical equivalence between the gradient of the weight matrix in a graph convolution layer and a weighted summation over the influence matrix of each node, weighted by its corresponding node degree in the input of GCN. This analysis not only reveals the root cause of degree-related unfairness in GCN, but also naturally leads to 
two new methods to mitigate the performance disparity with respect to node degrees, including (1) a pre-processing method named \rawlsgcn-Graph that precomputes a doubly stochastic normalization of the input adjacency matrix to train the GCN; and (2) an in-processing method named \rawlsgcn-Grad that normalizes the gradient so that each node will have an equal importance in computing the gradients of weight matrices.

The main contributions of this paper are summarized as follows.
\begin{compactitem}
    \item \textbf{Problem.} To our best knowledge, we are the first to introduce the Rawlsian difference principle to GCN so as to mitigate the degree-related unfairness. 
    
    \item \textbf{Analysis.} We reveal the mathematical root cause of the degree-related unfairness in GCN. 
    
    \item \textbf{Algorithms.} We propose two easy-to-implement methods to mitigate the degree bias, with no need to change the existing GCN architecture or introduce additional parameters.
    
    \item \textbf{Evaluations.} We perform extensive empirical evaluations on real-world graphs, which demonstrate that our proposed methods (1) achieve comparable accuracy with the vanilla GCN, (2) significantly decrease the degree bias, and (3) take almost the same training time as the vanilla GCN. Surprisingly, by mitigating the bias of low-degree nodes, our methods can sometimes improve the overall classification accuracy by a significant margin.
\end{compactitem}

The rest of this paper is organized as follows. Section~\ref{sec:prelim} formally defines the problem of enforcing the Rawlsian difference principle on GCN. Our mathematical analysis and the proposed methods are introduced in Section~\ref{sec:method}. Section~\ref{sec:experiment} presents the experimental settings and evaluations. We review the related work in Section~\ref{sec:related}. Finally, Section~\ref{sec:conclusion} concludes the paper.
\section{Problem Definition}\label{sec:prelim}
In this section, we first introduce the main symbols used throughout the paper in Table~\ref{tab:symbol}. Then we present a brief review on the Graph Convonlutional Network (GCN) and the Rawlsian difference principle. Finally, we formally define the problem of enforcing the Rawlsian difference principle on GCN.

\begin{table}[ht!]
	\centering
	\vspace{-3mm}
	\caption{Table of symbols.}
	\vspace{-4mm}
	\label{tab:symbol}
	\begin{tabular}{c|c}
		\hline
		\textbf{Symbols} & \textbf{Definitions and Descriptions} \\
		\hline
		$\mathbf{A}$ & a matrix \\
		$\mathbf{A}^T$ & transpose of matrix ${\mathbf A}$ \\
		$\mathbf{u}$ & a vector \\
		\hline
		$\mathcal{G}$ & a graph \\
		$\mathcal{V}$ & a set of nodes \\
		$J(\cdot)$ & objective function \\
		$\sigma(\cdot)$ & activation function \\
		$\mathbf{X}$ & node feature matrix \\
		$\mathbf{H}^{(l)}$ & node representations at $l$-th layer \\
		$\mathbf{W}^{(l)}$ & weight matrix at $l$-th layer \\
		$L$ & number of graph convolution layers \\
		$d_l$ & hidden dimension of $l$-th layer \\
		$\textit{deg}(u)$ & degree of node $u$ \\
		\hline
	\end{tabular}
	\vspace{-2mm}
\end{table}

Unless otherwise specified, we denote matrices with bold upper-case letters (i.e., $\mathbf{A}$), vectors with bold lower-case letters (i.e., $\mathbf{x}$) and scalars with italic lower-case letters (i.e., $c$). We use rules similar to NumPy in Python for matrix and vector indexing. $\mathbf{A}[i, j]$ represents the entry of matrix $\mathbf{A}$ at the $i$-th row and the $j$-th column. $\mathbf{A}[i, :]$ and $\mathbf{A}[:, j]$ represent the $i$-th row and the $j$-th column of matrix $\mathbf{A}$, respectively. We use superscript $^T$ to represent the transpose of a matrix, i.e., $\mathbf{A}^T$ is the transpose of matrix $\mathbf{A}$.

\subsection{Preliminaries}
\noindent \textbf{A -- Graph Convolutional Network.} We denote a graph as $\mathcal{G}=\{\mathcal{V}_{\mathcal{G}}, \mathbf{A}, \mathbf{X}\}$ where $\mathcal{V}_{\mathcal{G}}$ is the set of $n$ nodes in the graph (i.e., $n=|\mathcal{V}_{\mathcal{G}}|$), $\mathbf{A}$ is the $n \times n$ adjacency matrix and $\mathbf{X}\in\mathbb{R}^{n \times d_0}$ is the node feature matrix.

GCN is a typical graph neural network model that contains a stack of graph convolution layers. Based on the first-order Chebyshev polynomial, the graph convolution layer learns the latent node representations through the message-passing mechanism in two major steps. First, each node in the graph aggregates its own representation with the representations of its one-hop neighbors. Then, the aggregated representations are transformed through a fully-connected layer. Mathematically, for the $l$-th graph convolution layer, denoting its output node representations as $\mathbf{H}^{(l)}$ (we assume $\mathbf{H}^{(0)} = \mathbf{X}$ for notation consistency), it computes the latent representation with $\mathbf{H}^{(l)} = \sigma(\mathbf{\hat A} \mathbf{H}^{(l-1)} \mathbf{W}^{(l)}), \forall l\in\{1, \ldots, L\}$
where $\sigma(\cdot)$ is the nonlinear activation function, $\mathbf{W}^{(l)}\in \mathbb{R}^{d_{l-1} \times d_{l}}$ is the weight matrix, and $\mathbf{\hat A} = \mathbf{\tilde D}^{-\frac{1}{2}} (\mathbf{A} + \mathbf{I}) \mathbf{\tilde D}^{-\frac{1}{2}}$ is the renormalized graph Laplacian with $\mathbf{\tilde D}$ being the diagonal degree matrix of $\mathbf{A} + \mathbf{I}$.

\noindent \textbf{B -- The Rawlsian Difference Principle.} The Rawlsian difference principle is one of the major aspects of the equality principle in the theory of distributive justice by John Rawls~\cite{rawls1971theory}. The difference principle achieves equality by maximizing the welfare of the worst-off groups. When the Rawlsian difference principle is in its stability, the performance of all groups are balanced since there is no worst-off group whose welfare should be maximized and all groups preserve their status quo. For a machine learning model that predicts task-specific labels for each data sample, the welfare is often defined as the predictive accuracy of the model~\cite{rahmattalabi2019exploring, hashimoto2018fairness}. We denote (1) $\mathcal{D} = \{\mathcal{D}_1, \ldots, \mathcal{D}_h\}$ as a dataset that can be divided into $h$ different groups, (2) $J(\mathcal{D}, \mathbf{Y}, \theta)$ as the task-specific loss function that the model with parameters $\theta$ aims to minimize where $\mathbf{Y}$ is the model output and (3) $U(\cdot, \theta)$ as the utility function that measures the predictive accuracy over a set of samples using the model with parameters $\theta$. The Rawlsian difference principle can be mathematically formulated as
\vspace{-2mm}
\begin{equation}\label{eq:difference_principle}
\begin{aligned}
	& \textrm{min}_\theta \quad \textrm{Var}(\{U(\mathcal{D}_i, \theta)|i = 1, \ldots, h\}) \\
	& \textrm{s.t.} \qquad \theta = \argmin\ J(\mathcal{D}, \mathbf{Y}, \theta)
\end{aligned}
\vspace{-2mm}
\end{equation}
where $\textrm{Var}(\{U(\mathcal{D}_i, \theta)|i = 1, \ldots, h\})$ calculates the variance of the utilities of the groups $\{\mathcal{D}_1, \ldots, \mathcal{D}_h\}$.

\vspace{-3mm}
\subsection{Problem Definition}
Despite superior performance of GCN in many tasks, GCN is often biased towards benefiting high-degree nodes. Following the overarching difference principle by John Rawls, we view the inconsistent predictive accuracy of GCN for high-degree and low-degree nodes 
as a distributive justice problem. However, directly enforcing the Rawlsian difference principle (Equation~\eqref{eq:difference_principle}) is nontrivial for two major reasons. First (C1), in many real-world applications, there could be multiple utility measures of interest. For example, in a classification task, an algorithm administrator might be interested in different measures like the classification accuracy, precision, recall and F1 score. Even if only one utility measure is considered, it is likely that the measure itself is non-differentiable, which conflicts with the end-to-end training paradigm of GCN. Second (C2), it is hard to decide whether a node is low-degree or high-degree by a clear threshold of degree value. A bad choice of the threshold value could even introduce more bias in calculating the average utilities. For example, if we set the threshold to be too large, the group of low-degree nodes might contain relatively high-degree nodes on which GCN achieves high utility. Then its average utility 
will increase by including these relatively high-degree nodes. In this case, even when the GCN balances the utilities between the groups of low-degree nodes and high-degree nodes, many nodes with relatively low degrees still suffer from the issue of low predictive accuracy.

To address the first challenge (C1), we replace the utility function $U$ with the loss function $J$ as a proxy measure of predictive accuracy. The intuition lies in the design of the end-to-end training paradigm of GCN, in which we minimize the loss function in order to maximize the predictive accuracy of GCN. Thus, we aim to achieve a balanced loss in the stability of the Rawlsian difference principle. As for the second challenge (C2), instead of setting a hard threshold to split the groups of low-degree nodes and high-degree nodes, we split the node set $\mathcal{V}=\cup_{i=1}^{\textit{deg}_{\textrm{max}}} \mathcal{V}_{i}$ to a maximum of $\textit{deg}_{\textrm{max}}$ degree groups where $\mathcal{V}_i$ refers to the set of nodes whose degrees are equal to $i$. With that, we formally define the problem of enforcing the Rawlsian difference principle on GCN as follows.


\vspace{-2mm}
\begin{prob}\label{prob:rawlsgcn}
	Enforcing the Rawlsian Difference Princple on GCN
\end{prob}
\vspace{-2mm}
\textbf{Input:} (1) an undirected graph $\mathcal{G}=\{\mathcal{V}_{\mathcal{G}}, \mathbf{A}, \mathbf{X}\}$; (2) an $L$-layer GCN with the set of weights $\theta$; (3) a task-specific loss function $J(\mathcal{G}, \mathbf{Y}, \theta)$ where $\mathbf{Y}$ is the model output.

\textbf{Output:} a well-trained GCN that (1) minimizes the task specific loss $J(\mathcal{G}, \mathbf{Y}, \theta)$ given the input graph $\mathcal{G}$ and (2) achieves a balanced loss for all degree groups $\mathcal{V}_i$ ($i=1,\ldots,\textit{deg}_{\textrm{max}}$). 
\section{Methodology}\label{sec:method}
In this section, we propose a family of algorithms, namely \rawlsgcn, to enforce the Rawlsian difference principle on GCN. We first present analysis on the source of degree-related unfairness, which turns out to be rooted in the gradient of weight parameters in the GCN. Then we discuss how to compute doubly stochastic matrix which is the key to mitigate the degree-related unfairness. Based on that, we present a pre-processing method (\rawlsgcn-Graph) and an in-processing method (\rawlsgcn-Grad) to solve Problem~\ref{prob:rawlsgcn}.

\vspace{-3mm}
\subsection{Source of Unfairness}

The key to solve Problem~\ref{prob:rawlsgcn} is to understand why the loss of a GCN varies among nodes with different degrees after training. Since the key component in training a GCN is the gradient matrix of the weight parameters with respect to the loss function, we seek to understand the root cause of such degree-related unfairness by analyzing it mathematically. In this section, our detailed analysis (Theorem~\ref{thm:source_of_unfairness}) reveals the following fact: in a graph convolution layer, the gradient matrix $\frac{\partial J}{\partial \mathbf{W}}$\footnote{We use $J$ to represent $J(\mathcal{G}, \mathbf{Y}, \theta)$ for notation simplicity.} of the loss function $J$ with respect to the weight parameter $\mathbf{W}$ is equivalent to a weighted summation of the influence matrix of each node, weighted by its degree in input adjacency matrix $\mathbf{\hat A}$. 

\begin{thm}\label{thm:source_of_unfairness}
Suppose we have an input graph $\mathcal{G}=\{\mathcal{V}_{\mathcal{G}}, \mathbf{A}, \mathbf{X}\}$, the renormalized graph Laplacian $\mathbf{\hat A} = \mathbf{\tilde D}^{-\frac{1}{2}} (\mathbf{A} + \mathbf{I}) \mathbf{\tilde D}^{-\frac{1}{2}}$, a nonlinear activation function $\sigma()$ and an $L$-layer GCN that minimizes a task-specific loss function $J$. For any $l$-th hidden graph convolution ($\forall l\in\{1,\ldots,L\}$) layer, the gradient of the loss function $J$ with respect to the weight parameter $\mathbf{W}^{(l)}$ is a linear combination of the influence of each node weighted by its degree in the renormalized graph Laplacian.
\begin{equation}
    \frac{\partial J}{\partial \mathbf{W}^{(l)}} 
    = \sum_{j=1}^{n} 
        \textit{deg}_{\mathbf{\hat A}}(j) 
        \mathbf{I}_j^{\textrm{(row)}}
    = \sum_{i=1}^{n} 
        \textit{deg}_{\mathbf{\hat A}}(i) 
        \mathbf{I}_i^{\textrm{(col)}}
\end{equation}
where $\textit{deg}_{\mathbf{\hat A}}(i)$ is the degree of node $i$ in the renormalized graph Laplacian $\mathbf{\hat{A}}$, $\mathbf{I}_j^{\textrm{(row)}} = \big(\mathbf{H}^{(l-1)}[j, :]\big)^T \mathbb{E}_{i \sim p_{\mathcal{\hat N}(j)}}\bigg[\frac{\partial J}{\partial \mathbf{E}^{(l)}[i, :]}\bigg]$ is the row-wise influence matrix of node $j$, $\mathbf{I}_i^{\textrm{(col)}} = \bigg(\mathbb{E}_{j \sim p_{\mathcal{\hat N}(i)}}\big[\mathbf{H}^{(l-1)}[j, :]\big]\bigg)^T \frac{\partial J}{\partial \mathbf{E}^{(l)}[i, :]}$ is the column-wise influence matrix of node $i$, $\mathbf{H}^{(l-1)}$ is the input node embeddings of the hidden layer and $\mathbf{E}^{(l)} = \mathbf{\hat A} \mathbf{H}^{(l-1)} \mathbf{W}^{(l)}$ is the node embeddings before the nonlinear activation.
\end{thm}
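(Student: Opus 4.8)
The plan is to start from the standard matrix-calculus expression for the gradient of a linear map and then reorganize it into the two claimed weighted-sum forms. First I would treat the nonlinearity as already absorbed into the incoming gradient $\frac{\partial J}{\partial \mathbf{E}^{(l)}}$, so the only map I need to differentiate is the linear one $\mathbf{E}^{(l)} = (\mathbf{\hat A}\mathbf{H}^{(l-1)})\mathbf{W}^{(l)}$. By the usual rule for the derivative of $J$ with respect to the right factor of a matrix product, this gives $\frac{\partial J}{\partial \mathbf{W}^{(l)}} = (\mathbf{\hat A}\mathbf{H}^{(l-1)})^T \frac{\partial J}{\partial \mathbf{E}^{(l)}} = (\mathbf{H}^{(l-1)})^T \mathbf{\hat A}^T \frac{\partial J}{\partial \mathbf{E}^{(l)}}$. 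I would then invoke the symmetry of the renormalized Laplacian: since $\mathcal{G}$ is undirected, $\mathbf{A}+\mathbf{I}$ is symmetric and hence so is $\mathbf{\hat A} = \mathbf{\tilde D}^{-1/2}(\mathbf{A}+\mathbf{I})\mathbf{\tilde D}^{-1/2}$, allowing me to replace $\mathbf{\hat A}^T$ by $\mathbf{\hat A}$. This symmetry is precisely what will later let the \emph{same} degree factor surface in both the row-wise and column-wise decompositions.

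The second step is to expand the triple product entrywise. Writing $\mathbf{G} := \frac{\partial J}{\partial \mathbf{E}^{(l)}}$, the $(p,q)$ entry of the gradient is $\sum_{j}\sum_{i} \mathbf{H}^{(l-1)}[j,p]\,\mathbf{\hat A}[j,i]\,\mathbf{G}[i,q]$. I would regroup this double sum in two ways. Collecting terms by $j$ yields $\frac{\partial J}{\partial \mathbf{W}^{(l)}} = \sum_j \big(\mathbf{H}^{(l-1)}[j,:]\big)^T \big(\sum_i \mathbf{\hat A}[j,i]\,\mathbf{G}[i,:]\big)$, an outer product of a length-$d_{l-1}$ column vector with a length-$d_l$ row vector (matching the shape of $\mathbf{W}^{(l)}$); collecting by $i$ instead yields $\sum_i \big(\sum_j \mathbf{\hat A}[j,i]\,\mathbf{H}^{(l-1)}[j,:]\big)^T \mathbf{G}[i,:]$.

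The key trick is then to read each inner neighbor-sum as a degree-scaled expectation. Defining $\textit{deg}_{\mathbf{\hat A}}(j) = \sum_i \mathbf{\hat A}[j,i]$ and the neighbor-sampling distribution $p_{\mathcal{\hat N}(j)}(i) = \mathbf{\hat A}[j,i]/\textit{deg}_{\mathbf{\hat A}}(j)$, I would factor $\sum_i \mathbf{\hat A}[j,i]\,\mathbf{G}[i,:] = \textit{deg}_{\mathbf{\hat A}}(j)\,\mathbb{E}_{i\sim p_{\mathcal{\hat N}(j)}}[\mathbf{G}[i,:]]$, turning the $j$-grouped sum directly into $\sum_j \textit{deg}_{\mathbf{\hat A}}(j)\,\mathbf{I}_j^{\textrm{(row)}}$. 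For the second equality I would use $\mathbf{\hat A}[j,i] = \mathbf{\hat A}[i,j]$ to factor $\sum_j \mathbf{\hat A}[j,i]\,\mathbf{H}^{(l-1)}[j,:] = \textit{deg}_{\mathbf{\hat A}}(i)\,\mathbb{E}_{j\sim p_{\mathcal{\hat N}(i)}}[\mathbf{H}^{(l-1)}[j,:]]$, giving $\sum_i \textit{deg}_{\mathbf{\hat A}}(i)\,\mathbf{I}_i^{\textrm{(col)}}$ and completing both identities.

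The computation is essentially routine once set up; the only genuine subtleties are (i) invoking the symmetry of $\mathbf{\hat A}$, without which the two decompositions would carry different row- versus column-degree weights, and (ii) ensuring the normalizing constant pulled out is exactly the row/column sum of $\mathbf{\hat A}$, i.e.\ the degree in the renormalized Laplacian rather than the raw adjacency degree. Getting these two points right is the main thing; the remainder is index bookkeeping.
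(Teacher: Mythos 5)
Your proposal is correct and follows essentially the same route as the paper: both reduce the gradient to $(\mathbf{H}^{(l-1)})^T\mathbf{\hat A}^T\frac{\partial J}{\partial \mathbf{E}^{(l)}}$, expand it as a double sum over node pairs, and regroup by row or column index to pull out the degree as the normalizing constant of a neighbor-sampling expectation. The only difference is presentational — you invoke the standard matrix-calculus rule and the symmetry of $\mathbf{\hat A}$ up front, whereas the paper derives the same identity entrywise via the chain rule through the activation — so there is nothing substantive to flag.
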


\vspace{-3mm}
\begin{proof}
To compute the derivative of the objective function $J$ with respect to the weight $\mathbf{W}^{(l)}$ in the $l$-th graph convolution layer, by the graph convolution $\mathbf{H}^{(l)} = \sigma(\mathbf{\hat A} \mathbf{H}^{(l-1)} \mathbf{W}^{(l)})$ and the chain rule of matrix derivative, we have
\begin{equation}\label{eq:grad_chain_rule}
	\frac{\partial J}{\partial \mathbf{W}^{(l)}[i, j]} = \sum_{a=1}^n \sum_{b=1}^{d_l} \frac{\partial J}{\partial \mathbf{H}^{(l)} [a, b]} \frac{\partial \mathbf{H}^{(l)} [a, b]}{\partial \mathbf{W}^{(l)}[i, j]}
\end{equation}
where $d_l$ is the number of columns in $\mathbf{H}^{(l)}$. To compute Equation~\eqref{eq:grad_chain_rule}, a key term to compute is $\frac{\partial \mathbf{H}^{(l)} [a, b]}{\partial \mathbf{W}^{(l)}[i, j]}$. Denoting $\sigma '$ as the derivative of the activation function $\sigma$, by the graph convolution, we get 
\vspace{-1mm}
\begin{equation}\label{eq:grad_key_term}
\begin{aligned}
	\frac{\partial \mathbf{H}^{(l)} [a, b]}{\partial \mathbf{W}^{(l)}[i, j]} 
	& = \frac{\partial \sigma\big((\mathbf{\hat A} \mathbf{H}^{(l-1)} \mathbf{W}^{(l)})[a, b]\big)}{\partial (\mathbf{\hat A} \mathbf{H}^{(l-1)} \mathbf{W}^{(l)})[a, b]} 
		\frac{\partial (\mathbf{\hat A} \mathbf{H}^{(l-1)} \mathbf{W}^{(l)})[a, b]}{\partial \mathbf{W}^{(l)}[i, j]} \\
	& = \sigma'\big((\mathbf{\hat A} \mathbf{H}^{(l-1)} \mathbf{W}^{(l)})[a, b]\big)
		(\mathbf{\hat A} \mathbf{H}^{(l-1)})[a, i] \mathbbm{1}[b == j]
\end{aligned}
\vspace{-2mm}
\end{equation}
Combining Equations~\eqref{eq:grad_chain_rule} and \eqref{eq:grad_key_term}, we have
\begin{equation}\label{eq:grad_elementwise}
\begin{aligned}
	\frac{\partial J}{\partial \mathbf{W}^{(l)}[i, j]} 
	& = \sum_{a} 
		\big( \mathbf{\hat A} \mathbf{H}^{(l-1)} \big)^T[i, a]
		\big(\frac{\partial J}{\partial \mathbf{H}^{(l)}} \circ  
			\sigma'(\mathbf{\hat A} \mathbf{H}^{(l-1)} \mathbf{W}^{(l)})\big)[a, j] \\
	& = \big(\mathbf{\hat A} \mathbf{H}^{(l-1)}\big)^T[i, :] 
		\big(\frac{\partial J}{\partial \mathbf{H}^{(l)}} \circ  
		\sigma'(\mathbf{\hat A} \mathbf{H}^{(l-1)} \mathbf{W}^{(l)})\big)[:, j]
\end{aligned}
\end{equation}
where $\circ$ represents the element-wise product. Writing Equation~\eqref{eq:grad_elementwise} into matrix form, we have
\vspace{-1mm}
\begin{equation}\label{eq:grad_matrix_raw}
	\frac{\partial J}{\partial \mathbf{W}^{(l)}} 
	= (\mathbf{H}^{(l-1)})^T 
	  \mathbf{\hat A}^T
	  \big(\frac{\partial J}{\partial \mathbf{H}^{(l)}} \circ  
		  \sigma'(\mathbf{\hat A} \mathbf{H}^{(l-1)} \mathbf{W}^{(l)})\big)
\vspace{-1mm}
\end{equation}
Let $\mathbf{E}^{(l)} = \mathbf{\hat A} \mathbf{H}^{(l-1)} \mathbf{W}^{(l)}$ denote the node embeddings before the nonlinear activation, i.e., $\mathbf{H}^{(l)} = \sigma(\mathbf{E}^{(l)})$. By the chain rule of matrix derivative, we have $\frac{\partial J}{\partial \mathbf{E}^{(l)}} = \frac{\partial J}{\partial \mathbf{H}^{(l)}} \circ \sigma'(\mathbf{\hat A} \mathbf{H}^{(l-1)} \mathbf{W}^{(l)})$. Then Equation~\eqref{eq:grad_matrix_raw} can be written as\footnote{A simplified result on linear GCN without nonlinear activation is shown in~\cite{guo2021orthogonal}, while our result (Equation~\eqref{eq:grad_matrix}) generalizes to GCN with \textit{arbitrary} differentiable nonlinear activation function.}
\vspace{-2mm}
\begin{equation}\label{eq:grad_matrix}
	\frac{\partial J}{\partial \mathbf{W}^{(l)}} 
	= (\mathbf{H}^{(l-1)})^T 
	  \mathbf{\hat A}^T
	  \frac{\partial J}{\partial \mathbf{E}^{(l)}}
\vspace{-2mm}
\end{equation}

To analyze the influence of each node on the gradient $\frac{\partial J}{\partial \mathbf{W}^{(l)}}$, 
we factorize Equation~\eqref{eq:grad_matrix} as follows.
\vspace{-2mm}
\begin{equation}\label{eq:grad_factorized}
	\frac{\partial J}{\partial \mathbf{W}^{(l)}} 
	= \sum_{i=1}^{n} \sum_{j=1}^{n} 
		\mathbf{\hat A}^T [i, j]
		(\mathbf{H}^{(l-1)}[i, :])^T
		\frac{\partial J}{\partial \mathbf{E}^{(l)}[j, :]}
\vspace{-2mm}
\end{equation}
Denoting the distribution $p_{\mathcal{\hat N}(i)}$ over the neighborhood of node $i$ in the renormalized graph Laplacian $\mathbf{\hat A}$ such that $p_{\mathcal{\hat N}(i)} (j) \propto \mathbf{\hat A}[i,j]=\mathbf{\hat A}[j,i], \forall j \sim p_{\mathcal{\hat N}(i)}$, we can rewrite Equation~\eqref{eq:grad_factorized} as 
\vspace{-2mm}
\begin{equation}\label{eq:grad_nodewise}
\begin{aligned}
	\frac{\partial J}{\partial \mathbf{W}^{(l)}} 
	& = \sum_{j=1}^{n} 
		\textit{deg}_{\mathbf{\hat A}}(j) 
		\big(\mathbf{H}^{(l-1)}[j, :]\big)^T 
		\mathbb{E}_{i \sim p_{\mathcal{\hat N}(j)}}\bigg[
		    \frac{\partial J}{\partial \mathbf{E}^{(l)}[i, :]}
		\bigg] \\
	& = \sum_{i=1}^{n} 
		\textit{deg}_{\mathbf{\hat A}}(i) 
		\bigg(\mathbb{E}_{j \sim p_{\mathcal{\hat N}(i)}}\big[\mathbf{H}^{(l-1)}[j, :]\big]\bigg)^T 
		\frac{\partial J}{\partial \mathbf{E}^{(l)}[i, :]}
\end{aligned}
\vspace{-2mm}
\end{equation}
where $\textit{deg}_{\mathbf{\hat A}}(i) = \sum_{i=1}^{n} \mathbf{\hat A}[i, j] = \sum_{i=1}^{n} \mathbf{\hat A}[j, i]$ is the degree of node $i$ in the renormalized graph Laplacian $\mathbf{\hat A}$. We define the row-wise influence matrix $\mathbf{I}_j^{\textrm{(row)}}$ of a node $j$ and the column-wise influence matrix $\mathbf{I}_i^{\textrm{(col)}}$ of a node $i$ as follows.
\vspace{-2mm}
\begin{equation}\label{eq:influence}
\begin{aligned}
    \mathbf{I}_j^{\textrm{(row)}} = 
	    \big(\mathbf{H}[j, :]\big)^T 
		\mathbb{E}_{i \sim p_{\mathcal{\hat N}(j)}}\bigg[
		    \frac{\partial J}{\partial \mathbf{E}^{(l)}[i, :]}
		\bigg] \\
    \mathbf{I}_i^{\textrm{(col)}} = 
        \bigg(\mathbb{E}_{j \sim p_{\mathcal{\hat N}(i)}}\big[\mathbf{H}[j, :]\big]\bigg)^T 
		\frac{\partial J}{\partial \mathbf{E}^{(l)}[i, :]}
\end{aligned}
\end{equation}
Then Equation~\eqref{eq:grad_nodewise} can be written as the weighted summation over the (row-/column-wise) influence matrix of each node, weighted by its corresponding degree in the renormalized graph Laplacian $\mathbf{\hat A}$.
\vspace{-2mm}
\begin{equation}\label{eq:grad_influence}
    \frac{\partial J}{\partial \mathbf{W}^{(l)}} 
    = \sum_{j=1}^{n} 
        \textit{deg}_{\mathbf{\hat A}}(j) 
        \mathbf{I}_j^{\textrm{(row)}}
    = \sum_{i=1}^{n} 
        \textit{deg}_{\mathbf{\hat A}}(i) 
        \mathbf{I}_i^{\textrm{(col)}}
    \vspace{-2mm}
\end{equation}
\end{proof}

\begin{figure}
    \centering
    \includegraphics[width=.48\textwidth]{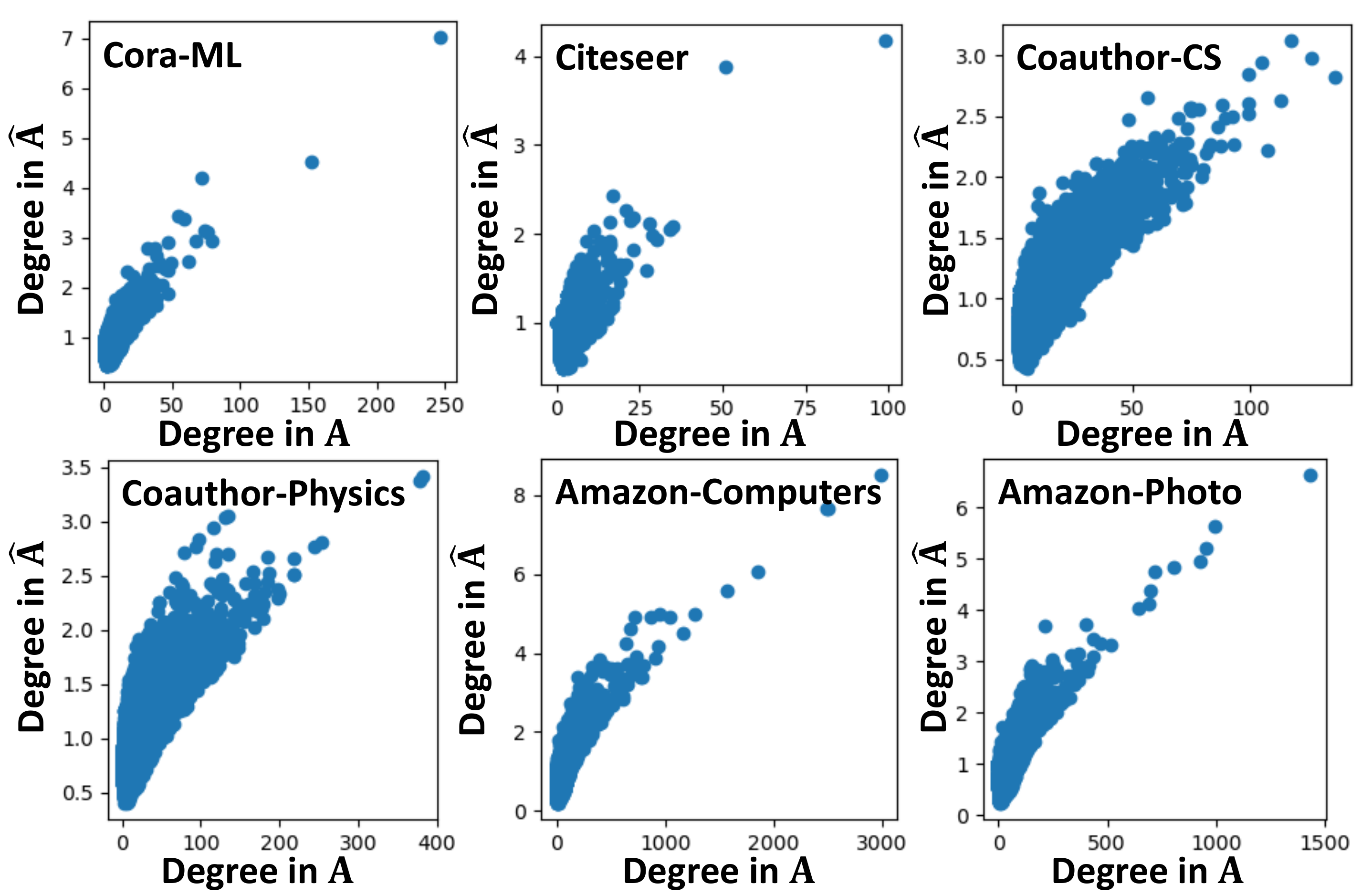}
    \vspace{-6mm}
    \caption{Node degrees in the original adjacency matrix $\mathbf{A}$ (x-axis) vs. the corresponding node degrees in the renormalized graph Laplacian $\mathbf{\hat A}$ (y-axis).}
    \label{fig:degree}
    \vspace{-6mm}
\end{figure}


\noindent \textbf{Remark:} By Theorem~\ref{thm:source_of_unfairness}, the gradient $\frac{\partial J}{\partial \mathbf{W}^{(l)}}$ is essentially equivalent to a linear combination of the influence matrix of each node in the graph, with the corresponding node degree $\textit{deg}_{\mathbf{\hat A}}$ in the renormalized graph Laplacian $\mathbf{\hat A}$ serving as the importance of the influence matrix. As a consequence, as long as the node degrees are not equal to a constant (e.g., $1$), the gradient $\frac{\partial J}{\partial \mathbf{W}^{(l)}}$ will favor the nodes with higher degrees in $\mathbf{\hat A}$. It is noteworthy that, even if $\mathbf{\hat A} = \mathbf{\tilde D}^{-\frac{1}{2}} (\mathbf{A} + \mathbf{I}) \mathbf{\tilde D}^{-\frac{1}{2}}$ is symmetrically normalized, such normalization only guarantees the largest eigenvalue of $\mathbf{\hat A}$ to be $1$
, whereas the degrees in $\mathbf{\hat A}$ are {\em not} constant as shown in Figure~\ref{fig:degree}. From the figure, we have two key observations: (1) the node degrees in $\mathbf{\hat A}$ are not equal among all nodes ($y$-axis); and (2) there is a positive correlation between a node degree in $\mathbf{A}$ ($x$-axis) and a node degree in $\mathbf{\hat A}$ ($y$-axis). Putting everything together, it means that the higher the node degree in $\mathbf{A}$, the more importance it has on the gradient of the weight matrix. This shows exactly why the vanilla GCN favors high-degree nodes while being biased against low-degree nodes.

\vspace{-3mm}
\subsection{Doubly Stochastic Matrix Computation}
In order to mitigate the node degree-related unfairness, the key idea is to normalize the importance of influence matrices (i.e., the node degrees in $\mathbf{\hat A}$) to $1$ in Equation~\eqref{eq:grad_influence}, such that each node will have equal importance in updating the weight parameters. To achieve that, Equation~\eqref{eq:grad_influence} naturally requires that the rows and columns of $\mathbf{\hat A}$ sums up to $1$, which is a doubly stochastic matrix.

Computing the doubly stochastic matrix is nontrivial. To achieve that, we adopt the Sinkhorn-Knopp algorithm, which is an iterative algorithm to balance a matrix into doubly stochastic form~\cite{sinkhorn1967concerning}. Mathematically speaking, given a non-negative $n \times n$ square matrix $\mathbf{A}$, it aims to find two $n\times n$ diagonal matrices $\mathbf{D}_1$ and $\mathbf{D}_2$ such that $\mathbf{D}_1\mathbf{A}\mathbf{D}_2$ is doubly stochastic. The intuition of Sinkhorn-Knopp algorithm is to learn a sequence of matrices whose rows and columns are alternatively normalized. Mathematically, defining $\mathbf{r}_0 = \mathbf{c}_0 = \mathbf{1}$ to be the column vectors of all $1$s and $\mathbf{x}^{-1}$ to be the operator for element-wise reciprocal, i.e., $\mathbf{x}^{-1}[i] = 1/\mathbf{x}[i]$, the Sinkhorn-Knopp algorithm alternatively calculates the following equations.
\begin{equation}\label{eq:sk_alg}
\begin{aligned}
    \mathbf{c}_{k+1} = (\mathbf{A}^T\mathbf{r}_k)^{-1} \quad \mathbf{r}_{k+1} = (\mathbf{A}\mathbf{c}_{k+1})^{-1}
\end{aligned}
\end{equation}
If the algorithm converges after $K$ iterations, the doubly stochastic form of $\mathbf{A}$ is $\mathbf{P} = \textrm{diag}(\mathbf{r}_K) \mathbf{A} \textrm{diag}(\mathbf{c}_K)$ where $\textrm{diag}(\mathbf{r}_K)$ and $\textrm{diag}(\mathbf{c}_K)$ diagonalize the column vectors $\mathbf{r}_K$ and $\mathbf{c}_K$ into diagonal matrices. The time and space complexities of computing the doubly stochastic matrix $\mathbf{P}$ is linear with respect to the size of input matrix $\mathbf{A}$.
\begin{lm}\label{lm:complexity}
(Time and space complexities) Let $\mathbf{A}$ be an $n \times n$ non-negative matrix with $m$ nonzero elements, the time complexity of the Sinkhorn-Knopp algorithm is $O(K(m+n))$ where $K$ is the number of iterations to convergence. It takes an additional $O(m+n)$ space.
\end{lm}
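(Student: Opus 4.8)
The plan is to bound the cost of one Sinkhorn-Knopp iteration, multiply by the number of iterations $K$, and then separately account for the final assembly of $\mathbf{P}$ and for the auxiliary memory. First I would examine a single update in Equation~\eqref{eq:sk_alg}. Each iteration performs two matrix-vector products, $\mathbf{A}^T\mathbf{r}_k$ and $\mathbf{A}\mathbf{c}_{k+1}$, followed by two element-wise reciprocals. Because $\mathbf{A}$ stores only $m$ nonzero entries, each matrix-vector product is evaluated by iterating over those nonzeros and costs $O(m)$ rather than the dense $O(n^2)$; each reciprocal acts on an $n$-dimensional vector and costs $O(n)$. Hence one iteration costs $O(m+n)$, and $K$ iterations cost $O(K(m+n))$.

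Next I would handle the output $\mathbf{P} = \textrm{diag}(\mathbf{r}_K)\,\mathbf{A}\,\textrm{diag}(\mathbf{c}_K)$. Left- and right-multiplication of $\mathbf{A}$ by diagonal matrices simply rescales its rows and columns, touching exactly the $m$ nonzero entries and preserving the zero pattern, so this step costs $O(m)$. Since $O(m)$ is dominated by the iteration cost, the total running time is $O(K(m+n))$, as claimed.

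For the space bound, beyond the sparse storage of the input $\mathbf{A}$ the algorithm only keeps the current iterates $\mathbf{r}_k$ and $\mathbf{c}_k$, each an $n$-vector requiring $O(n)$, together with the output $\mathbf{P}$, which inherits the sparsity pattern of $\mathbf{A}$ and thus needs $O(m)$ storage; the intermediate products can reuse these same buffers. Summing gives the additional $O(m+n)$ space.

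The argument is otherwise routine; the one point that requires care is recognizing that both the matrix-vector multiplications and the final diagonal rescaling exploit sparsity, so that their cost scales with $m$ rather than $n^2$, and that $\mathbf{P}$ introduces no new nonzeros beyond those already present in $\mathbf{A}$.
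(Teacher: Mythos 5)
Your proof is correct and follows essentially the same route as the paper's: bound one iteration by $O(m)$ for the sparse matrix--vector products plus $O(n)$ for the reciprocals, multiply by $K$, and charge $O(n)$ for the iterate vectors and $O(m)$ for the output to get the space bound. Your additional observation that assembling $\mathbf{P}$ costs only $O(m)$ time and preserves the sparsity pattern is a small but welcome refinement the paper leaves implicit.
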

\begin{proof}
In the $k$-th iteration (where $1\leq k \leq K$) of the Sinkhorn-Knopp algorithm, it takes $O(m)$ time to compute $\mathbf{A}^T\mathbf{r}_k$ and $\mathbf{A}\mathbf{c}_{k+1}$. Then it takes $O(n)$ time for element-wise reciprocal. Thus, the time complexity of an iteration is $O(m+n)$. Since the algorithm takes $K$ iterations to converge, the overall time complexity is $O(K(m+n))$. Regarding the space complexity, it takes an additional $O(n)$ space to store vectors $\mathbf{c}_k$ and $\mathbf{r}_k$ in the $k$-th iteration and an additional $O(m)$ time to store the resulting doubly stochastic form of matrix $\mathbf{A}$. Thus, the overall space complexity is $O(m+n)$.
\end{proof}

Next, we prove the convergence of the Sinkhorn-Knopp algorithm in our setting. To this end, we first present the definition of the diagonal of a matrix corresponding to a column permutation.

\begin{defn}\label{defn:diagonal}
(Diagonal of a matrix corresponding to a column permutation~\cite{sinkhorn1967concerning}) Let $\mathbf{A}$ be an $n \times n$ square matrix and $\delta$ be a permutation over the set $\{1,\ldots, n\}$.
\begin{compactitem}
    \item [(1)] The sequence of elements $\mathbf{A}[1, \delta(1)], \mathbf{A}[2, \delta(2)], \ldots, \mathbf{A}[n, \delta(n)]$ is called the diagonal of $\mathbf{A}$ corresponding to $\delta$.
    \item [(2)] If $\delta$ is the identity, the diagonal is the main diagonal of $\mathbf{A}$.
    \item [(3)] If $\mathbf{A}[i, \delta(i)] > 0, \forall i$, the diagonal is a positive diagonal.
\end{compactitem}
\end{defn}

We then provide the formal definition of the support of a matrix in Definition~\ref{defn:support}.

\begin{defn}\label{defn:support}
(Support of a non-negative square matrix~\cite{sinkhorn1967concerning}) Let $\mathbf{A}$ be an $n \times n$ non-negative matrix, $\mathbf{A}$ is said to have support if $\mathbf{A}$ contains a positive diagonal. $\mathbf{A}$ is said to have total support if $\mathbf{A} \neq \mathbf{0}$ and if every positive element of $\mathbf{A}$ lies on a positive diagonal, where $\mathbf{0}$ is the zero matrix of the same size as $\mathbf{A}$. 
\end{defn}

By Definitions~\ref{defn:diagonal} and \ref{defn:support}, Sinkhorn and Knopp~\cite{sinkhorn1967concerning} proved the following result.

\vspace{-2mm}
\begin{thm}\label{thm:sk_convergence}
(Sinkhorn-Knopp theorem~\cite{sinkhorn1967concerning}) If $\mathbf{A}$ is an $n\times n$ non-negative matrix, the Sinkhorn-Knopp algorithm converges and finds the unique doubly stochastic matrix of the form $\mathbf{D}_1\mathbf{A}\mathbf{D}_2$ if and only if $\mathbf{A}$ has total support, where $\mathbf{D}_1$ and $\mathbf{D}_2$ are diagonal matrices.
\end{thm}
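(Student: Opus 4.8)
The plan is to prove the two implications separately, reading the ``if and only if'' as an equivalence between (i) the Sinkhorn--Knopp iteration of Equation~\eqref{eq:sk_alg} converging to a unique doubly stochastic matrix of the exact form $\mathbf{D}_1 \mathbf{A} \mathbf{D}_2$, and (ii) $\mathbf{A}$ having total support in the sense of Definition~\ref{defn:support}. The necessity direction (i) $\Rightarrow$ (ii) is the easy one, which I would dispatch first; the sufficiency direction is where the real work lies.

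For necessity, suppose $\mathbf{P} = \mathbf{D}_1 \mathbf{A} \mathbf{D}_2$ is doubly stochastic with $\mathbf{D}_1, \mathbf{D}_2$ diagonal and strictly positive (which they must be, being limits of the positive scalings $\mathrm{diag}(\mathbf{r}_k)$ and $\mathrm{diag}(\mathbf{c}_k)$). Then $\mathbf{P}$ has exactly the same zero pattern as $\mathbf{A}$. By the Birkhoff--von Neumann theorem, $\mathbf{P}$ is a convex combination $\sum_t \lambda_t \mathbf{P}_{\delta_t}$ of permutation matrices with all $\lambda_t > 0$. For any positive entry $\mathbf{A}[i,j] > 0$ we have $\mathbf{P}[i,j] > 0$, so some permutation $\delta_t$ satisfies $\delta_t(i) = j$; since $\lambda_t > 0$, every entry $\mathbf{P}[m, \delta_t(m)]$ is positive, hence so is $\mathbf{A}[m, \delta_t(m)]$. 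This exhibits a positive diagonal through $(i,j)$, so every positive element of $\mathbf{A}$ lies on a positive diagonal, i.e. $\mathbf{A}$ has total support.

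For sufficiency, I would recast the iteration as convex optimization rather than tracking the raw recursion. Writing $\mathbf{D}_1 = \mathrm{diag}(e^{u_i})$ and $\mathbf{D}_2 = \mathrm{diag}(e^{v_j})$, the doubly stochastic conditions $\sum_j \mathbf{A}[i,j] e^{u_i + v_j} = 1$ and $\sum_i \mathbf{A}[i,j] e^{u_i + v_j} = 1$ are precisely the stationarity conditions of the convex potential $f(\mathbf{u}, \mathbf{v}) = \sum_{i,j} \mathbf{A}[i,j] e^{u_i + v_j} - \mathbf{1}^T \mathbf{u} - \mathbf{1}^T \mathbf{v}$, and the updates in Equation~\eqref{eq:sk_alg} are exactly block-coordinate minimization of $f$, alternately optimizing over $\mathbf{u}$ and $\mathbf{v}$ in closed form. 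The strategy is then: (a) show that total support makes $f$ coercive on the quotient modulo the one-dimensional invariance $(\mathbf{u}, \mathbf{v}) \mapsto (\mathbf{u} + s\mathbf{1}, \mathbf{v} - s\mathbf{1})$, so a finite minimizer exists; (b) invoke convergence of alternating minimization for a coercive convex function to conclude the iterates converge to that minimizer; (c) read off from stationarity that the limit scaling yields $\mathbf{D}_1 \mathbf{A} \mathbf{D}_2$ doubly stochastic with the same support as $\mathbf{A}$; and (d) obtain uniqueness from strict convexity of $f$ transverse to the invariant direction.

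The main obstacle is step (a), the equivalence between coercivity of $f$ and total support, which is the combinatorial heart of the theorem. The danger is that along some direction the factors $e^{u_i + v_j}$ are driven to the boundary, so certain scalings diverge and positive entries of $\mathbf{A}$ are annihilated in the limit; total support is exactly the condition ruling this out, because Definition~\ref{defn:support} guarantees each positive entry participates in a positive diagonal (a perfect matching), and summing the stationarity constraints along such a matching pins the corresponding exponents to stay bounded. Concretely, I expect to argue by contradiction via a Hall-type / max-flow argument: an unbounded minimizing direction would force a set of rows whose positive entries are confined to too few columns, contradicting the perfect-matching structure that total support provides. Establishing this boundedness cleanly---and thereby guaranteeing that the limit preserves the exact zero pattern of $\mathbf{A}$, so that it genuinely has the form $\mathbf{D}_1 \mathbf{A} \mathbf{D}_2$---is the delicate part, and it is precisely where the ``total support'' hypothesis (as opposed to mere support) is indispensable~\cite{sinkhorn1967concerning}.
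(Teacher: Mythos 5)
The paper gives no proof of this theorem at all: it is imported verbatim from Sinkhorn and Knopp~\cite{sinkhorn1967concerning} and the proof is marked ``Omitted,'' so any argument you supply is necessarily a different route from the paper's. Your route is a legitimate one --- the modern variational proof of matrix scaling rather than Sinkhorn and Knopp's original iteration-tracking argument --- and the necessity half is essentially complete as written: positivity of the diagonal scalings forces $\mathbf{D}_1\mathbf{A}\mathbf{D}_2$ to share the zero pattern of $\mathbf{A}$, and Birkhoff--von Neumann then threads a positive diagonal of $\mathbf{A}$ through every positive entry, which is exactly total support in the sense of Definition~\ref{defn:support}.

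For sufficiency, your identification of Equation~\eqref{eq:sk_alg} with exact two-block coordinate minimization of $f(\mathbf{u},\mathbf{v})=\sum_{i,j}\mathbf{A}[i,j]e^{u_i+v_j}-\mathbf{1}^T\mathbf{u}-\mathbf{1}^T\mathbf{v}$ is correct, and you have put your finger on the only delicate step, (a). It can be closed more directly than by a Hall/max-flow contradiction: along any direction $(\Delta\mathbf{u},\Delta\mathbf{v})$ for which $f$ does not tend to $+\infty$ one must have $\Delta u_i+\Delta v_j\le 0$ for every $(i,j)$ in the support of $\mathbf{A}$ (else an exponential blows up) and $\sum_i\Delta u_i+\sum_j\Delta v_j\ge 0$ (else the linear term blows up); summing $\Delta u_i+\Delta v_{\delta(i)}$ over any positive diagonal $\delta$ gives $\sum_i\Delta u_i+\sum_j\Delta v_j\le 0$, hence equality, hence every term along $\delta$ vanishes. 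Total support lets you run this through a positive diagonal containing an arbitrary support entry $(i_0,j_0)$, so $\Delta u_i+\Delta v_j=0$ on the entire support and the direction is an exact invariance of $f$ and of the scaled matrix; mere support would only control the entries of one diagonal, which is precisely how the limit can acquire extra zeros and fail to have the form $\mathbf{D}_1\mathbf{A}\mathbf{D}_2$. Two cautions remain: the invariance subspace is one-dimensional only when the bipartite support graph of $\mathbf{A}$ is connected (otherwise each component can be shifted independently, though the scaled matrix, depending only on $u_i+v_j$ over the support, is still unique, which is all the theorem asserts); and steps (b) and (d) are standard but are currently statements of intent rather than proofs. As a plan the proposal is sound; as a proof it is not yet finished.
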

\vspace{-2mm}
\begin{proof}
Omitted.
\vspace{-2mm}
\end{proof}

Based on Theorem~\ref{thm:sk_convergence}, we give Lemma~\ref{lm:sk_gcn} which says that the Sinkhorn-Knopp algorithm always converges in our setting and finds the doubly stochastic matrix with respect to the renormalized graph Laplacian $\mathbf{\hat A}$.

\begin{lm}\label{lm:sk_gcn}
Given an adjacency matrix $\mathbf{A}$, if $\mathbf{\hat A} = \mathbf{\tilde D}^{-\frac{1}{2}}(\mathbf{A} + \mathbf{I}) \mathbf{\tilde D}^{-\frac{1}{2}}$ with $\mathbf{\tilde D}$ as the degree matrix of $\mathbf{A} + \mathbf{I}$, the Sinkhorn-Knopp algorithm always converges to find the unique doubly stochastic form of $\mathbf{\hat A}$.
\end{lm}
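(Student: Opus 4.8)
The plan is to reduce everything to the Sinkhorn--Knopp theorem (Theorem~\ref{thm:sk_convergence}): since that result guarantees convergence to a unique doubly stochastic matrix of the form $\mathbf{D}_1 \mathbf{\hat A} \mathbf{D}_2$ precisely when $\mathbf{\hat A}$ is non-negative and has total support, it suffices to verify that our specific $\mathbf{\hat A} = \mathbf{\tilde D}^{-\frac{1}{2}}(\mathbf{A}+\mathbf{I})\mathbf{\tilde D}^{-\frac{1}{2}}$ meets both conditions. Non-negativity is immediate: $\mathbf{A}$ is an adjacency matrix and hence entrywise non-negative, $\mathbf{A}+\mathbf{I}$ remains non-negative, and $\mathbf{\tilde D}^{-\frac{1}{2}}$ is a diagonal matrix with strictly positive entries (every node acquires a self-loop, so each diagonal degree is at least $1$), so the triple product has non-negative entries. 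Thus the whole content of the lemma reduces to establishing total support in the sense of Definition~\ref{defn:support}.

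To attack total support, I would first record two structural facts about $\mathbf{\hat A}$. First, its main diagonal is strictly positive: the added identity forces $(\mathbf{A}+\mathbf{I})[i,i] \geq 1$, and conjugating by the positive diagonal $\mathbf{\tilde D}^{-\frac{1}{2}}$ keeps $\mathbf{\hat A}[i,i] > 0$ for every $i$. Second, $\mathbf{\hat A}$ is symmetric, because $\mathbf{A}$ is the adjacency matrix of an undirected graph and the symmetric renormalization preserves symmetry; consequently $\mathbf{\hat A}[i,j] > 0$ if and only if $\mathbf{\hat A}[j,i] > 0$. The strictly positive main diagonal already shows that $\mathbf{\hat A}$ has support and that $\mathbf{\hat A} \neq \mathbf{0}$, and it also disposes of the diagonal entries, since each $\mathbf{\hat A}[i,i]$ lies on the positive diagonal corresponding to the identity permutation (Definition~\ref{defn:diagonal}).

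The crux is then to show that every positive off-diagonal entry also lies on some positive diagonal. Given $\mathbf{\hat A}[i,j] > 0$ with $i \neq j$, I would exhibit the transposition $\delta = (i\ j)$ that swaps $i$ and $j$ and fixes every other index. Along this permutation the relevant entries are $\mathbf{\hat A}[i,j]$, $\mathbf{\hat A}[j,i]$, and $\mathbf{\hat A}[k,k]$ for $k \notin \{i,j\}$: the first is positive by hypothesis, the second by symmetry, and the remainder by the positive main diagonal. Hence $\delta$ yields a positive diagonal through the $(i,j)$ position, so by Definition~\ref{defn:support} the matrix $\mathbf{\hat A}$ has total support, and Theorem~\ref{thm:sk_convergence} finishes the argument.

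The only real obstacle is recognizing which two properties of $\mathbf{\hat A}$ drive the total-support verification --- symmetry paired with a strictly positive main diagonal --- after which the transposition construction makes the off-diagonal case essentially immediate. It is worth emphasizing that the self-loops contributed by $\mathbf{I}$ are exactly what rule out the otherwise problematic isolated-node case, where a bare adjacency matrix could have a zero diagonal entry and might fail to have (total) support.
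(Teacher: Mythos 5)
Your proposal is correct and follows essentially the same route as the paper: reduce to the Sinkhorn--Knopp theorem by showing $\mathbf{\hat A}$ has total support, using the positive main diagonal for support and the transposition $\delta_{ij}$ for each edge $(i,j)$ to place every positive off-diagonal entry on a positive diagonal. Your version is in fact slightly more careful than the paper's, since you explicitly invoke the symmetry of $\mathbf{\hat A}$ to justify that $\mathbf{\hat A}[j,i]>0$ along the transposed diagonal.
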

\begin{proof}
The key idea is to prove that $\mathbf{\hat A}$ has total support. Let $\textit{deg}_{\mathbf{\hat A}}(i)$ be the degree of node $i$ in $\mathbf{\hat A}$. It is trivial that $\mathbf{\hat A}$ has support because its main diagonal is positive. In order to prove that $\mathbf{\hat A}$ has total support, for any undirected edge $\mathbf{A}[i, j]$, we define a column permutation $\delta_{ij}$ as a permutation that satisfies (1) $\delta_{ij}(i)=j$; (2) $\delta_{ij}(j)=i$ and (3) $\delta_{ij}(k)=k, \forall k\neq i\ \textrm{and}\ k\neq j$. Then, for any edge $(i, j)$, by applying the permutation $\delta_{ij}$, the diagonal of $\mathbf{\hat A}$ corresponding to $\delta_{ij}$ is a positive diagonal due to the non-negativity of $\mathbf{\hat A}$. Thus, $\mathbf{\hat A}$ has total support because all positive elements of $\mathbf{\hat A}$ lie on positive diagonals, which completes the proof.
\end{proof}

Lemma~\ref{lm:sk_gcn} guarantees that we can always calculate the doubly stochastic form of $\mathbf{\hat A}$ using the Sinkhorn-Knopp algorithm, in order to ensure the equal importance of node influence in calculating the gradient of the weight parameter $\frac{\partial J}{\partial \mathbf{W}^{(l)}}$ in the $l$-th layer.

\vspace{-3mm}
\subsection{\rawlsgcn\ Algorithms}
If the gradient $\frac{\partial J}{\partial \mathbf{W}^{(l)}}$ is computed using the doubly stochastic matrix $\mathbf{\hat A}_{\textit{DS}}$ with respect to the renormalized graph Laplacian $\mathbf{\hat A}$, it is fair with respect to node degrees because all nodes will have equal importance in determining the gradient, i.e., their degrees in $\mathbf{\hat A}_{\textit{DS}}$ are all equal to $1$. Thus, a fair gradient with respect to node degrees can be calculated using Equation~\eqref{eq:grad_matrix} as follows.
\vspace{-2mm}
\begin{equation}\label{eq:fair_grad}
    \frac{\partial J}{\partial \mathbf{W}^{(l)}}_{\textrm{fair}}
    = (\mathbf{H}^{(l-1)})^T 
	  \mathbf{\hat A}_{\textit{DS}}^T
	  \frac{\partial J}{\partial \mathbf{E}^{(l)}}
\end{equation}
where $\mathbf{E}^{(l)} = \mathbf{\hat A} \mathbf{H}^{(l-1)} \mathbf{W}^{(l)}$.

Observing the computation of fair gradient in Equation~\eqref{eq:fair_grad}, it naturally leads to two methods to mitigate degree-related unfairness, including (1) a pre-processing method named \rawlsgcn-Graph which utilizes the doubly stochastic matrix $\mathbf{\hat A}_{\textit{DS}}$ as the input adjacency matrix, and (2) an in-processing method named \rawlsgcn-Grad that normalizes the gradient in GCN with Equation~\eqref{eq:fair_grad}.

\noindent \textbf{Method \#1: Pre-processing with \rawlsgcn-Graph.} If we are allowed to modify the input of the GCN whereas the model itself are fixed, we can precompute the input renormalized graph Laplacian $\mathbf{\hat A}$ into its doubly stochastic form $\mathbf{\hat A}_{\textit{DS}}$ and feed $\mathbf{\hat A}_{\textit{DS}}$ as the input of the GCN. With that, the gradient computed using Equation~\ref{eq:grad_matrix} is equivalent to Equation~\eqref{eq:fair_grad}. As a consequence, the Rawlsian difference principle is naturally ensured since all nodes in $\mathbf{\hat A}_{\textit{DS}}$ have the same degree. Given a graph $\mathcal{G} = \{\mathcal{V}, \mathbf{A}, \mathbf{X}\}$ and an $L$-layer GCN, \rawlsgcn-Graph adopts the following 3-step strategy.
\begin{compactitem}
    \item [\textbf{1.}] Precompute $\mathbf{\hat A} = \mathbf{\tilde D}^{-\frac{1}{2}} (\mathbf{A} + \mathbf{I}) \mathbf{\tilde D}^{-\frac{1}{2}}$ where $ \mathbf{\tilde D}$ is the diagonal degree matrix of $\mathbf{A}+\mathbf{I}$.
    \item [\textbf{2.}] Precompute $\mathbf{\hat A}_{\textit{DS}}$ by applying the Sinkhorn-Knopp algorithm on $\mathbf{\hat A}$.
    \item [\textbf{3.}] Input $\mathbf{\hat A}_{\textit{DS}}$ and $\mathbf{X}$ to the GCN for model training.
\end{compactitem}

\noindent\textbf{Method \#2: In-processing with \rawlsgcn-Grad.} If we have access to the model or the model parameters while the input data is fixed, we can precompute the doubly stochastic matrix $\mathbf{\hat A}_{\textit{DS}}$ and use $\mathbf{\hat A}_{\textit{DS}}$ to compute the fair gradient by Equation~\eqref{eq:fair_grad}. Then training GCN with the fair gradient ensures the Rawlsian difference principle because nodes of different degrees share the same importance in determining the gradient for gradient descent-based optimization. Given a graph $\mathcal{G} = \{\mathcal{V}, \mathbf{A}, \mathbf{X}\}$ and an $L$-layer GCN, the general workflow of \rawlsgcn-Grad is as follows. 
\begin{compactitem}
    \item [\textbf{1.}] Precompute $\mathbf{\hat A} = \mathbf{\tilde D}^{-\frac{1}{2}} (\mathbf{A} + \mathbf{I}) \mathbf{\tilde D}^{-\frac{1}{2}}$ where $ \mathbf{\tilde D}$ is the diagonal degree matrix of $\mathbf{A}+\mathbf{I}$.
    \item [\textbf{2.}] Precompute $\mathbf{\hat A}_{\textit{DS}}$ by applying the Sinkhorn-Knopp algorithm on $\mathbf{\hat A}$.
    \item [\textbf{3.}] Input $\mathbf{\hat A}$ and $\mathbf{X}$ to the GCN for model training.
    \item [\textbf{4.}] For each graph convolution layer $l\in\{1,\ldots, L\}$, compute the fair gradient $\frac{\partial J}{\partial \mathbf{W}^{(l)}}_{\textrm{fair}}$ for each weight matrix $\mathbf{W}^{(l)}$ using Equation~\eqref{eq:fair_grad}.
    \item [\textbf{5.}] Update the model parameters $\mathbf{W}^{(l)}$ using the fair gradient $\frac{\partial J}{\partial \mathbf{W}^{(l)}}_{\textrm{fair}}$.
    \item [\textbf{6.}] Repeat steps 4-5 until the model converges. 
\end{compactitem}

An advantage of both \rawlsgcn-Graph and \rawlsgcn-Grad is that no additional time complexity will be incurred during the learning process of GCN, since we can precompute and store $\mathbf{\hat A}_{\textit{DS}}$. For \rawlsgcn-Graph, $\mathbf{\hat A}_{\textit{DS}}$ has the same number of nonzero elements as $\mathbf{\hat A}$ because computing $\mathbf{\hat A}_{\textit{DS}}$ is essentially rescaling each nonzero element in $\mathbf{\hat A}$. Thus, there is no additional time cost during the graph convolution operation with $\mathbf{\hat A}_{\textit{DS}}$. For \rawlsgcn-Grad, computing the fair gradient with Equation~\eqref{eq:fair_grad} enjoys the same time complexity as the vanilla gradient computation (Equation~\eqref{eq:grad_matrix}) because $\mathbf{\hat A}_{\textit{DS}}$ and $\mathbf{\hat A}$ has exactly the same number of nonzero entries. Thus, there is no additional time cost in big-O notation in optimizing the GCN parameters. In terms of the additional costs in computing $\mathbf{\hat A}_{\textit{DS}}$, it bears a linear time and space complexities with respect to the number of nodes and the number of edges in the graph as stated in Lemma~\ref{lm:complexity}.

\section{Experimental Evaluation}\label{sec:experiment}
In this section, we evaluate our proposed \rawlsgcn\ methods in the task of semi-supervised node classification to answer the following questions:
\begin{compactitem}
\item [\textbf{Q1.}] How accurate are the \rawlsgcn\ methods in node classification?
\item [\textbf{Q2.}] How effective are the \rawlsgcn\ methods in debiasing?
\item [\textbf{Q3.}] How efficient are the \rawlsgcn\ methods in time and space?
\end{compactitem}

\vspace{-3mm}
\subsection{Experimental Settings}\label{subsec:settings}
\noindent \textbf{Datasets.} We utilize six publicly available real-world networks for evaluation. Their statistics, including the number of nodes, the number of edges, number of node features, number of classes and the median of node degrees (Median Deg.), are summarized in Table~\ref{tab:datasets}. For semi-supervised node classification, we use a fixed random seed to generate the training/validation/test sets for each network. The training set contains 20 nodes per class. The validation set and the test set contain 500 nodes and 1000 nodes, respectively.

\begin{table}[ht]
	\centering
    \vspace{-4mm}
	\caption{Statistics of datasets.}
	\vspace{-4mm}
	\label{tab:datasets}
	\resizebox{\linewidth}{!}{
	\begin{tabular}{c|ccccc}
		\hline
		\textbf{Name} & \textbf{Nodes} & \textbf{Edges} & \textbf{Features} & \textbf{Classes} & \textbf{Median Deg.} \\ 
		\hline
		Cora-ML & 2,995 & 16,316 & 2,879 & 7 & 3 \\
		Citeseer & 3,327 & 9,104 & 3,703 & 6 & 2 \\ 
		Coauthor-CS & 18,333 & 163,788 & 6,805 & 15 & 6 \\ 
		Coauthor-Physics & 34,493 & 495,924 & 8,415 & 5 & 10 \\ 
		Amazon-Computers & 13,752 & 491,722 & 767 & 10 & 22 \\ 
		Amazon-Photo & 7,650 & 238,162 & 745 & 8 & 22 \\ 
		\hline
	\end{tabular}}
	\vspace{-4mm}
\end{table}

\noindent \textbf{Baseline Methods.} We compare the proposed \rawlsgcn\ methods with several baseline methods
, including GCN~\cite{kipf2017semi}, DEMO-Net~\cite{wu2019net}, DSGCN~\cite{tang2020investigating}, Tail-GNN~\cite{liu2021tail}, Adversarial Fair GCN (AdvFair)~\cite{bose2019compositional} and REDRESS~\cite{dong2021individual}. 
Detailed description of each baseline method is provided in Appendix\footnote{We use the official PyTorch implementation of GCN, Tail-GNN and REDRESS for experimental evaluation. For DEMO-Net, we implement our own PyTorch version and consult with the original authors for a sanity check. For DSGCN, we implement our own PyTorch version due to the lack of publicly available implementation.}.

\begin{table*}[ht]
	\centering
	\caption{Effectiveness for node classification. Lower is better for bias (in gray). Higher is better for accuracy (Acc., in percentage).}
	\vspace{-4mm}
	\label{tab:effectiveness}
	\begin{tabular}{ccacaca}
		\hline
		\multirow{2}{*}{\textbf{Method}} & \multicolumn{2}{c}{\textbf{Cora-ML}} & \multicolumn{2}{c}{\textbf{Citeseer}} &
		\multicolumn{2}{c}{\textbf{Coauthor-CS}} \\
		\cmidrule(){2-7}
		& \textbf{Acc.} & \textbf{Bias} & \textbf{Acc.} & \textbf{Bias} & \textbf{Acc.} & \textbf{Bias} \\
		\hline
		\textbf{GCN} & $80.10 \pm 0.812$ & $0.392 \pm 0.046$ 
		             & $68.60 \pm 0.341$ & $0.353 \pm 0.040$ 
		             & $93.28 \pm 0.194$ & $0.075 \pm 0.004$ \\
		\hline
		\textbf{DEMO-Net} & $61.60 \pm 0.687$ & $0.181 \pm 0.015$ 
		                  & $60.26 \pm 0.408$ & $0.315 \pm 0.022$ 
		                  & $65.90 \pm 0.583$ & $0.164 \pm 0.006$ \\
		\textbf{DSGCN} & $30.26 \pm 5.690$ & $8.003 \pm 2.766$ 
		               & $31.42 \pm 3.257$ & $6.887 \pm 1.947$ 
		               & $44.20 \pm 7.155$ & $1.460 \pm 0.397$ \\
		\textbf{Tail-GNN} & $78.54 \pm 0.582$ & $0.503 \pm 0.284$ 
		                  & $66.34 \pm 0.009$ & $0.655 \pm 0.382$ 
		                  & $92.66 \pm 0.196$ & $0.052 \pm 0.031$ \\
		\textbf{AdvFair} & $67.56 \pm 2.594$ & $10.01 \pm 2.480$ 
		                 & $50.26 \pm 6.277$ & $3.146 \pm 2.425$ 
		                 & $84.82 \pm 2.254$ & $12.26 \pm 6.797$ \\
		\textbf{REDRESS} & $75.70 \pm 0.620$ & $0.955 \pm 0.213$ 
		                 & $65.80 \pm 0.518$ & $0.944 \pm 0.077$ 
		                 & $92.44 \pm 0.233$ & $0.028 \pm 0.003$ \\
		\hline
		\textbf{\rawlsgcn-Graph (Ours)} & $76.96 \pm 1.098$ & $0.105 \pm 0.012$
		                            & $69.34 \pm 0.745$ & $0.196 \pm 0.013$ 
		                            & $92.52 \pm 0.264$ & $0.043 \pm 0.002$  \\
		\textbf{\rawlsgcn-Grad (Ours)} & $79.34 \pm 1.247$ & $0.232 \pm 0.065$
		                            & $68.81 \pm 0.462$ & $0.283 \pm 0.047$ 
		                            & $92.68 \pm 0.240$ & $0.058 \pm 0.007$  \\
		\hline
	\end{tabular}
    \\
 	\begin{tabular}{ccacaca}
		\hline
		\multirow{2}{*}{\textbf{Method}} & \multicolumn{2}{c}{\textbf{Coauthor-Physics}} & \multicolumn{2}{c}{\textbf{Amazon-Computers}} & \multicolumn{2}{c}{\textbf{Amazon-Photo}} \\
		\cmidrule(){2-7}
		& \textbf{Acc.} & \textbf{Bias} & \textbf{Acc.} & \textbf{Bias} & \textbf{Acc.} & \textbf{Bias} \\
		\hline
		\textbf{GCN} & $93.96 \pm 0.367$ & $0.023 \pm 0.001$ 
		             & $64.84 \pm 0.641$ & $0.353 \pm 0.026$ 
		             & $79.58 \pm 1.507$ & $0.646 \pm 0.038$ \\
		\hline
		\textbf{DEMO-Net} & $77.50 \pm 0.566$ & $0.084 \pm 0.010$ 
		                  & $26.48 \pm 3.455$ & $0.456 \pm 0.021$ 
		                  & $39.92 \pm 1.242$ & $0.243 \pm 0.013$ \\
		\textbf{DSGCN} & $79.08 \pm 1.533$ & $0.262 \pm 0.075$ 
		               & $27.68 \pm 1.663$ & $1.407 \pm 0.685$ 
		               & $26.76 \pm 3.387$ & $0.921 \pm 0.805$ \\
		\textbf{Tail-GNN} & OOM & OOM 
		                  & $76.24 \pm 1.491$ & $1.547 \pm 0.670$ 
		                  & $86.00 \pm 2.715$ & $0.471 \pm 0.264$ \\
		\textbf{AdvFair} & $87.44 \pm 1.132$ & $0.892 \pm 0.502$ 
		                 & $53.50 \pm 5.362$ & $4.395 \pm 1.102$ 
		                 & $75.80 \pm 3.563$ & $51.24 \pm 39.94$ \\
		\textbf{REDRESS} & $94.48 \pm 0.172$ & $0.019 \pm 0.001$ 
		                 & $80.36 \pm 0.206$ & $0.455 \pm 0.032$ 
		                 & $89.00 \pm 0.369$ & $0.186 \pm 0.030$ \\
		\hline
		\textbf{\rawlsgcn-Graph (Ours)} & $94.06 \pm 0.196$ & $0.016 \pm 0.000$ 
		                                & $80.16 \pm 0.859$ & $0.121 \pm 0.010$ 
		                                & $88.58 \pm 1.116$ & $0.071 \pm 0.006$ \\
		\textbf{\rawlsgcn-Grad (Ours)} & $94.18 \pm 0.306$ & $0.021 \pm 0.002$
		                               & $74.18 \pm 2.530$ & $0.195 \pm 0.029$ 
		                               & $83.70 \pm 0.672$ & $0.186 \pm 0.068$ \\
		\hline
	\end{tabular}
	\vspace{-3mm}
\end{table*}

\noindent \textbf{Metrics.} We use cross entropy as the loss function in semi-supervised node classification. To answer \textbf{Q1}, we evaluate the accuracy of node classification (i.e., Acc. in Table~\ref{tab:effectiveness}). For metrics in \textbf{Q2}, we define the bias w.r.t. the Rawlsian difference principle as the variance of degree-specific average cross entropy (i.e., AvgCE) to be consistent with the definition of Problem~\ref{prob:rawlsgcn}. Mathematically, it is defined as 
\begin{equation}
\begin{array}{c}
	\textrm{AvgCE}(k) = \mathbb{E}[\{\textrm{CE}(u), \forall\ \textrm{node}\ u\ \textrm{such that}\ \textit{deg}(u) = k\}] \\
	\textit{Bias} = \textrm{Var}(\{\textrm{AvgCE}(k), \forall\ \textrm{node degree}\ k\})
\end{array}
\end{equation}
where $\textrm{CE}(u)$ and $\textit{deg}(u)$ are the cross entropy and the degree of node $u$, respectively. To measure the efficiency (\textbf{Q3}), we count the number of learnable parameters (\# Param. in Table~\ref{tab:efficiency}), GPU memory usage in MB (Memory in Table~\ref{tab:efficiency}) and training time in seconds.


\begin{figure*}
    \centering
    \begin{subfigure}[b]{.33\textwidth}
        \centering
         \includegraphics[width=\textwidth]{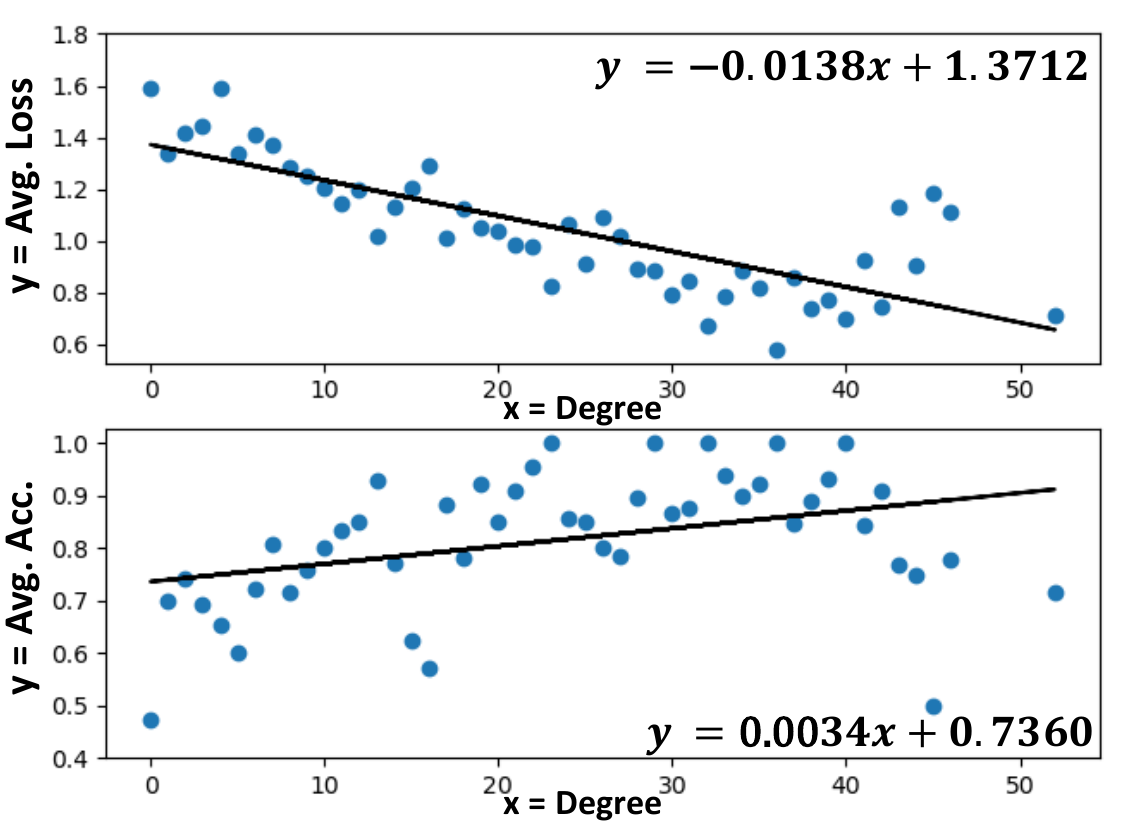}
         \caption{GCN}
         \label{fig:vis_exp_gcn}
    \end{subfigure}
    \begin{subfigure}[b]{.33\textwidth}
        \centering
         \includegraphics[width=\textwidth]{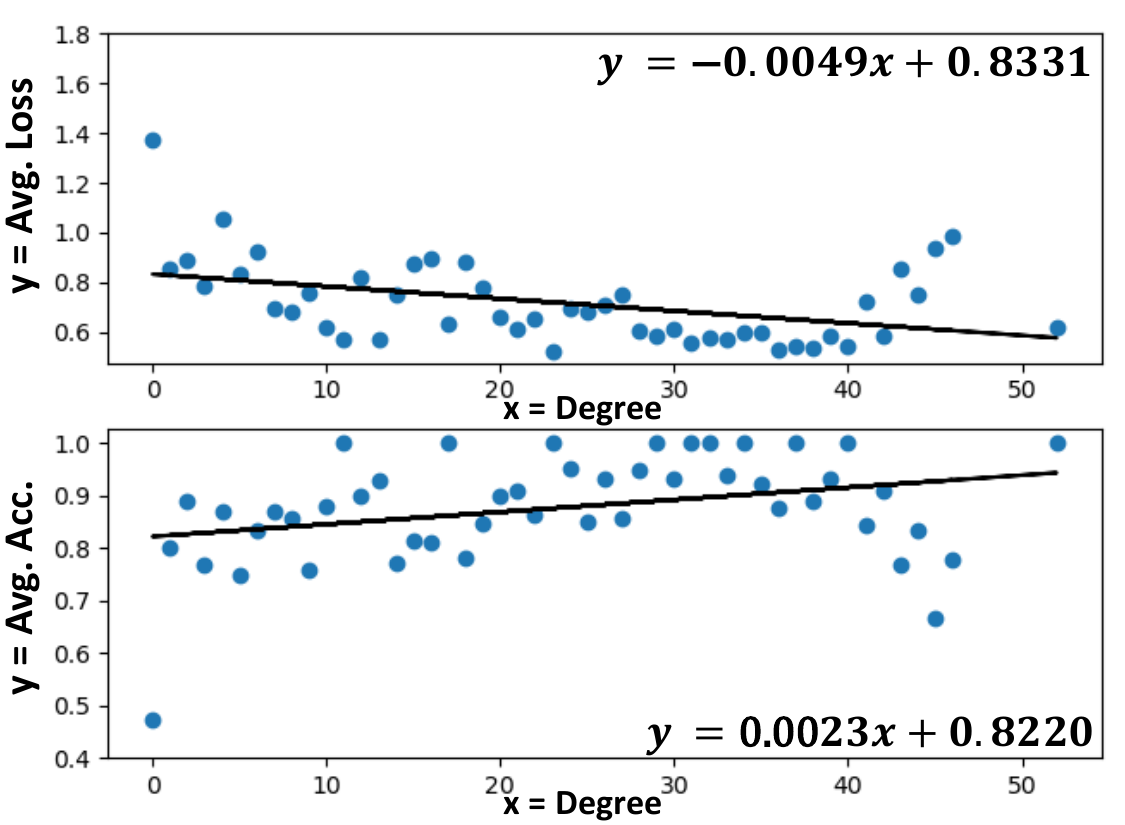}
         \caption{\rawlsgcn-Graph}
         \label{fig:vis_exp_rawlsgcn_graph}
    \end{subfigure}
    \begin{subfigure}[b]{.33\textwidth}
        \centering
         \includegraphics[width=\textwidth]{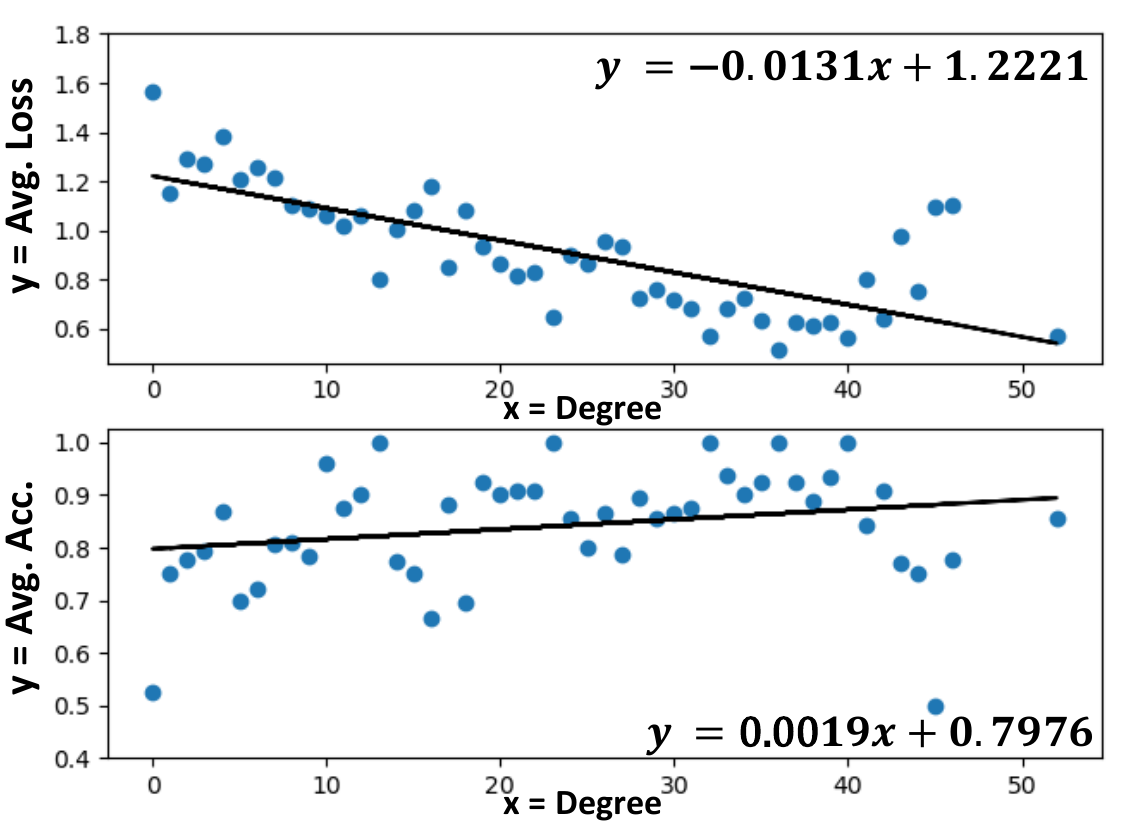}
         \caption{\rawlsgcn-Grad}
         \label{fig:vis_exp_rawlsgcn_grad}
    \end{subfigure}
    \vspace{-8mm}
    \caption{Visualization on how our proposed \rawlsgcn\ algorithms improve the performance of low-degree nodes on the Amazon-Photo dataset. (a) shows the results for vanilla GCN. (b) shows the results for \rawlsgcn-Graph. (c) shows the results for \rawlsgcn-Grad. Similar to Figure~\ref{fig:intro_example}, blue dots refer to the average loss (Avg. Loss) and average accuracy (Avg. Acc.) of a specific degree group in the top and bottom figures, respectively. Black lines are the regression lines of the blue dots in each figure. For a more clear visualization, we only consider the degree groups which contain more than five nodes.}
    \label{fig:vis_example}
    \vspace{-4mm}
\end{figure*}

\noindent \textbf{Parameter Settings.} For all methods, we use a $2$-layer GCN as the backbone model with the hidden dimension as $64$. We evaluate all methods on $5$ different runs and report their average performance. We train \rawlsgcn\ models for $100$ epochs without early stopping. For the purpose of the reproducibility, the random seeds for these $5$ runs are varied from $0$ to $4$. We use the Adam optimizer to train all methods. Unless otherwise specified, the default weight decay of the optimizer is set to $0.0005$. The detailed parameter settings (including learning rate, training epochs of baseline methods) are included in Appendix.


\noindent \textbf{Machine Configuration and Reproducibility.} All datasets are publicly available. All codes are programmed in Python 3.8.5 and PyTorch 1.9.0. All experiments are performed on a Linux server with 96 Intel Xeon Gold 6240R CPUs at 2.40 GHz and 4 Nvidia Tesla V100 SXM2 GPUs with 32 GB memory. We will release the source code of the proposed methods upon the publication of the paper.

\vspace{-2mm}
\subsection{Main Results}
\noindent \textbf{Effectiveness results.} 
The evaluation results are shown in Table~\ref{tab:effectiveness}. We do not report the results of Tail-GNN for the Coauthor-Physics dataset due to the out-of-memory (OOM) error. From the table, we can see that our proposed \rawlsgcn-Graph and \rawlsgcn-Grad are the only two methods that can consistently reduce the bias across all datasets. Though REDRESS reduces bias more than the proposed methods on the Coauthor-CS dataset, it bears a much higher bias than our proposed methods in other datasets. Surprisingly, on the  Amazon-Computers and Amazon-Photo datasets, our proposed \rawlsgcn-Graph and \rawlsgcn-Grad significantly improve the overall classification accuracy by mitigating the bias of low-degree nodes. This is because, compared with the vanilla GCN, the classification accuracy of low-degree nodes are increased while the classification accuracy of high-degree nodes are largely retained, resulting in the significantly improved overall accuracy.

In Figure~\ref{fig:vis_example}, we visualize how \rawlsgcn-Graph and \rawlsgcn-Grad benefit the low-degree nodes and balances the performance between low-degree nodes and high-degree nodes. From the figure, we observe that both \rawlsgcn-Graph (Figure~\ref{fig:vis_exp_rawlsgcn_graph}) and \rawlsgcn-Grad (Figure~\ref{fig:vis_exp_rawlsgcn_grad}) show lower average loss values and higher average classification accuracy compared with GCN (Figure~\ref{fig:vis_exp_gcn}). Moreover, to visualize how our proposed methods balance the performance between low-degree nodes and high-degree nodes, we perform linear regression on the average loss and average accuracy with respect to node degree. From the figure, we can see that the slope of the regression lines for \rawlsgcn-Graph and \rawlsgcn-Grad are flatter than the slope of the regression line in GCN.

\noindent \textbf{Efficiency results.} 
We measure the memory and time consumption of all methods in Table~\ref{tab:efficiency}. In the table, GCN (100 epochs) and GCN (200 epochs) denote the GCN models trained with 100 epochs and 200 epochs, respectively. From the table, we have two key observations: (1) Compared with other baseline methods, \rawlsgcn-Graph and \rawlsgcn-Grad have fewer number of parameters to learn and are much more efficient in memory consumption; (2) \rawlsgcn-Graph and \rawlsgcn-Grad bear almost the same training time as the vanilla GCN with 100 epochs of training (i.e., GCN (100 epochs)) while all other baseline methods significantly increase the training time.

\begin{table}[t]
	\centering
	\caption{Efficiency of training a $2$-layer GCN on the Amazon-Photo dataset. Lower is better for all columns. GPU memory usage (Memory) is measured in MB. Training time is measured in seconds.}
	\vspace{-4mm}
	\label{tab:efficiency}
	\resizebox{\linewidth}{!}{\begin{tabular}{c|c|c|c}
		\hline 
		\textbf{Method} & \textbf{\# Param.} & \textbf{Memory} & \textbf{Training Time} \\
		\hline
		\textbf{GCN (100 epochs)} & $48,264$ & $1,461$ & $13.335$ \\
		\textbf{GCN (200 epochs)} & $48,264$ & $1,461$ & $28.727$ \\
		\hline
		\textbf{DEMO-Net} & $11,999,880$ & $1,661$ & $9158.5$ \\
		\textbf{DSGCN} & $181,096$ & $2,431$ & $2714.8$ \\
		\textbf{Tail-GNN} & $2,845,567$ & $2,081$ & $94.058$ \\
		\textbf{AdvFair} & $89,280$ & $1,519$ & $148.11$ \\
		\textbf{REDRESS} & $48,264$ & $1,481$ & $291.69$ \\
		\hline 
		\textbf{\rawlsgcn-Graph (Ours)} & $48,264$ & $1,461$ & $11.783$ \\
		\textbf{\rawlsgcn-Grad (Ours)} & $48,264$ & $1,461$ & $12.924$ \\
		\hline
	\end{tabular}}
	\vspace{-4mm}
\end{table}

\begin{table}[t]
	\centering
	\caption{Ablation study of different matrix normalization techniques on the Amazon-Photo. Lower is better for bias (in gray). Higher is better for accuracy (Acc., in percentage).}
	\vspace{-4mm}
	\label{tab:ablation}
	\resizebox{\linewidth}{!}{
	\begin{tabular}{c|c|c|a}
		\hline
		\textbf{Method} & \textbf{Normalization} & \textbf{Acc.} & \textbf{Bias} \\
		\hline\hline
		\multirow{4}{*}{\rawlsgcn-Graph} & \textbf{Row} & $87.98 \pm 0.791$ & $0.076 \pm 0.006$ \\
		                             & \textbf{Column} & $88.32 \pm 2.315$ & $0.138 \pm 0.112$ \\
		                             & \textbf{Symmetric}  & $89.12 \pm 0.945$ & $0.071 \pm 0.005$ \\
		\cline{2-4}
		                             & \textbf{Doubly Stochastic} & $88.58 \pm 1.116$ & $0.071 \pm 0.006$ \\
		
		\hline\hline
		\multirow{4}{*}{\rawlsgcn-Grad} & \textbf{Row} & $82.86 \pm 1.139$ & $0.852 \pm 0.557$ \\
		                             & \textbf{Column} & $84.96 \pm 1.235$ & $0.221 \pm 0.064$ \\
		                             & \textbf{Symmetric}  & $82.92 \pm 1.121$ & $0.744 \pm 0.153$ \\
		\cline{2-4}
		                             & \textbf{Doubly Stochastic} & $83.70 \pm 0.672$ & $0.186 \pm 0.068$ \\
		\hline
	\end{tabular}}
	\vspace{-6mm}
\end{table}

\vspace{-3mm}
\subsection{Ablation Study}
To evaluate the effectiveness of the doubly stochastic normalization on the renormalized graph Laplacian, we compare it with three other normalization methods, including row normalization, column normalization and symmetric normalization. As shown in Table~\ref{tab:ablation}, while all these normalization methods lead to similar accuracy (within 2\% difference), doubly stochastic normalization leads to a much smaller bias than others.

\vspace{-3mm}
\section{Related Work}\label{sec:related}

\textbf{Graph neural network} 
is an emerging research topic due to its strong empirical performance in many tasks like classification~\cite{jing2021hdmi}, regression~\cite{jing2021network} and recommendation~\cite{wang2019neural}. Bruna et al.~\cite{bruna2013spectral} propose the Graph Convolutional Neural Networks (GCNNs) by simulating the convolution operation in the spectrum of graph Laplacian. Kipf et al.~\cite{kipf2017semi} propose the Graph Convolutional Networks (GCNs) that aggregates the neighborhood information inspired by the localized first-order approximation of spectral graph convolution. Hamilton et al.~\cite{hamilton2017inductive} propose GraphSAGE that learns node representation in the inductive setting. Atwood et al.~\cite{atwood2016diffusion} propose a diffusion-based graph convolution operation that aggregates the neighborhood information through the graph diffusion process. Veli{\v{c}}kovi{\'c} et al.~\cite{velivckovic2018graph} introduce the multi-head self-attention mechanism into graph neural networks. Regarding graph neural networks for degree-aware representation learning, Wu et al.~\cite{wu2019net} propose two methods (i.e., hashing function and degree-specific weight function) to learn degree-specific representations for node and graph classification. Liu et al.~\cite{liu2021tail} learns robust embeddings for low-degree nodes (i.e., tail nodes) by introducing a novel neighborhood translation operation to predict missing information for the tail nodes and utilizing a discriminator to differentiate the head node embeddings and tail node embeddings. Different from \cite{wu2019net, liu2021tail}, our work directly improves the performance on low-degree nodes without relying on any additional degree-specific weights or degree-aware operations.

\textbf{Fair graph mining} 
has attracted much research attention recently. In terms of group fairness, it has been incorporated into several graph mining tasks, including ranking, clustering, node embedding and graph neural networks. Tsioutsiouliklis et al.~\cite{tsioutsiouliklis2021fairness} propose two fairness-aware PageRank algorithms (i.e., fairness-sensitive PageRank and locally fair PageRank) to ensure a certain proportion of total PageRank mass is assigned to nodes in a specific demographic group. Kleindessner et al.~\cite{kleindessner2019guarantees} propose fair spectral clustering which aims to ensure each demographic group is represented with the same fraction as in the whole dataset in all clusters. Bose et al.~\cite{bose2019compositional} study the compositional fairness for graph embedding, which aims to debias the node embeddings with respect to a combination of sensitive attributes. Rahman et al.~\cite{rahman2019fairwalk} modify the random walk procedure in node2vec~\cite{grover2016node2vec} so that neighbors in the minority demographic group enjoy a higher probability of being reached. Buyl et al.~\cite{buyl2020debayes} debias the embeddings by injecting bias information in the prior of conditional network embedding~\cite{kang2018conditional}. Dai et al.~\cite{dai2021say} promote statistical parity and equal opportunity in graph neural networks through adversarial learning. Individual fairness is another fundamental fairness notion. Kang et al.~\cite{kang2020inform} present the first effort on individually fair graph mining through Laplacian regularization on the pairwise node similarity matrix. Dong et al.~\cite{dong2021individual} incorporate ranking-based individual fairness into graph neural networks inspired by learning-to-rank. However, neither group fairness nor individual fairness is suitable to solve our problem. For group fairness, the low-degree nodes are often the majority group due to the long-tailed degree distribution. For individual fairness, it only considers fairness in node level by considering the similarity between two nodes, whereas our problem considers fairness in group level in which the groups are defined by node degrees. Many other fairness notions
are also studied in graph mining. Agarwal et al.~\cite{agarwal2021towards} exploit the connection between counterfactual fairness and stability to learn fair and robust node embeddings. Tang et al.~\cite{tang2020investigating} propose a RNN-based degree-specific weight generator with self-supervised learning to mitigate the degree-related bias. Our work differs from \cite{tang2020investigating} because we do not change the GCN architecture and do not introduce any degree-specific weights. Rahmattalabi et al.~\cite{rahmattalabi2019exploring} incorporate Rawlsian difference principle to the graph covering problem. Different from \cite{rahmattalabi2019exploring} that deals with a combinatorial problem, we are the first to introduce such fairness notion in graph neural networks without compromising its differentiable end-to-end paradigm. 
\vspace{-3mm}
\section{Conclusion}\label{sec:conclusion}
In this paper, we introduce the Rawlsian difference principle to Graph Convolutional Network (GCN), where we aim to mitigate degree-related unfairness. We formally define the problem of enforcing the Rawlsian difference principle on GCN as balancing the loss among groups of nodes with the same degree. Based on that, we reveal the mathematical root cause of degree-related unfairness by studying the computation of 
the gradient of weight parameters 
in GCN. Guided by its computation, we propose a pre-processing method named \rawlsgcn-Graph and an in-processing method named \rawlsgcn-Grad to mitigate the degree-related unfairness. Both methods rely on the Sinkhorn-Knopp algorithm in computing the doubly stochastic matrix of the graph Laplacian, which is guaranteed to converge in the context of GCN. Extensive evaluation on six real-world datasets
demonstrate the effectiveness and efficiency of our proposed methods. In the future, we will investigate how to generalize our methods to other graph neural networks.

\vspace{-3mm}
\section*{Acknowledgement}
J. Kang and H. Tong are partially supported by NSF (1947135 and 1939725), 
DARPA (HR001121C0165) 
and Army Research Office (W911NF2110088). 
The content of the information in this document does not necessarily reflect the position or the policy of the Government, and no official endorsement should be inferred.  The U.S. Government is authorized to reproduce and distribute reprints for Government purposes notwithstanding any copyright notation here on.

\bibliographystyle{ACM-Reference-Format}
\bibliography{ref}

\eject
\balance
\clearpage
\setcounter{secnumdepth}{0}
\section*{Appendix}\label{sec:appendix}

\subsection*{A -- \rawlsgcn-Graph vs. \rawlsgcn-Grad}
Here, we discuss the advantages and disadvantages of our proposed methods. We list the pros and cons of our proposed methods in Table~\ref{tab:comparison}. Moreover, though \rawlsgcn-Graph and \rawlsgcn-Grad have the same pre-processing procedure in the setting of fixed input graph, we believe that the general idea of normalizing the gradient in \rawlsgcn-Grad is useful for distributed training of extremely large graphs, in which a local subgraph of each node is often sampled using a (non-)deterministic sampler for feature aggregation and gradient computation. In this setting, the input graph is not deterministic during training and often asymmetric. Consequently, it is often impossible to precompute the doubly stochastic matrix for \rawlsgcn-Graph. However, we can still use the sampling distribution of local subgraph to calculate the normalized gradient using Eqs.~\eqref{eq:influence} and \eqref{eq:grad_influence}.
\begin{table}[ht]
    \centering
    \caption{Pros and cons of our proposed methods.}
    \label{tab:comparison}
    \vspace{-4mm}
    \begin{tabular}{|p{0.075\linewidth} | p{0.4\linewidth} | p{0.4\linewidth}|}
    \hline
    & \textbf{\rawlsgcn-Graph}  & \textbf{\rawlsgcn-Grad} \\ \hline
    \multirow{8}{*}{\textbf{Pros}}  
    & \begin{itemize}[
        align=left,
        itemindent=0pt,
        labelsep=0pt,
        labelwidth=1em,
        leftmargin=1em
    ]
        \item No need to modify the GNN model;
        \item Higher accuracy on graph with more diverse degree empirically;
        \item Smaller bias than \rawlsgcn-Grad.
    \end{itemize} 
    & \begin{itemize}[
        align=left,
        itemindent=0pt,
        labelsep=0pt,
        labelwidth=1em,
        leftmargin=1em
    ]
        \item Higher accuracy on graph with less diversity in node degree empirically;
        \item Able to work in use cases like distributed training of large graphs.
    \end{itemize} \\
    \hline
    \multirow{9}{*}{\textbf{Cons}}  
    & \begin{itemize}[
        align=left,
        itemindent=0pt,
        labelsep=0pt,
        labelwidth=1em,
        leftmargin=1em
    ]
        \item Lower accuracy on smaller graph/graph with less diversity in node degree empirically;
        \item May be unable to work in use cases like distributed training on extremely large graphs.
    \end{itemize}
    & \begin{itemize}[
        align=left,
        itemindent=0pt,
        labelsep=0pt,
        labelwidth=1em,
        leftmargin=1em
    ]
        \item Slightly higher bias than RawlsGCN-Graph;
        \item Need to change the optimizer of model.
    \end{itemize} \\
    \hline
    \end{tabular}
\end{table}

\subsection*{B -- Descriptions of Baseline Methods}
\begin{itemize}[
	align=left,
	leftmargin=1em,
	itemindent=0pt,
	labelsep=0pt,
	labelwidth=1em,
]
\item \textbf{GCN}~\cite{kipf2017semi} refers to the original Graph Convolutional Network (GCN) without fairness considerations. In our experiment, we adopt the same architecture as in \cite{kipf2017semi} but increasing the hidden dimension to 64 for a fair comparison.
\item \textbf{DEMO-Net}~\cite{wu2019net} uses multi-task graph convolution where each task learns degree-specific node representations in order to preserve the degree-specific graph structure. We use DEMO-Net with degree-specific weight function instead of hashing function due to its higher classification accuracy and better stability. For a fair comparison, we remove the components for order-free and seed-oriented representation learning as they are irrelevant to fairness w.r.t. node degree. 
\item \textbf{DSGCN}~\cite{tang2020investigating} mitigates degree-related bias by degree-specific graph convolution, which infers the degree-specific weights using a Recurrent Neural Network (RNN). We use 2 degree-specific graph convolution layers in DSGCN with the same hidden dimension settings as the vanilla GCN. We set the number of RNN cell to $10$ (i.e., nodes with degree larger than $10$ will share the same degree-specific weight), which is consistent with \cite{tang2020investigating}. We set the activation function of the RNN cell to tanh function. For a fair comparison, we only use the degree-specific graph convolution module (i.e., DSGCN in \cite{tang2020investigating}) for our experiments. 
\item \textbf{Tail-GNN}~\cite{liu2021tail} learns robust embedding for low-degree nodes (i.e., tail nodes) in an adversarial learning fashion with the novel neighborhood translation mechanism. It first generates forged tail nodes from nodes with degree higher than a certain threshold $k$. Then the neighborhood translation operation predicts the missing information of tail nodes and forged tail nodes by a translation model learned from head nodes. After that, a discriminator is applied to predict whether a node is head or tail based on the node representations. In our experiment, we set $k=5$ for forged tail nodes generation. If a training node $u$ has degree less than $k$, we do not generate the forged tail node using this training node.
\item \textbf{Adversarial Fair GCN} (AdvFair) is a variant of \cite{bose2019compositional} which ensures group fairness for graph embeddings in the compositional setting (i.e., for different combinations of sensitive attributes). We set the node degree as the sensitive attribute, i.e., nodes of the same degree form a demographic group. For a fair comparison, we compute the node embeddings using 2 graph convolution layers with ReLU activation, each of which has 64 hidden dimension. The `filtered' embeddings are computed by the filter, which is a 2-layer multi-layer perceptron (MLP) with 128 and 64 hidden dimensions, respectively. The discriminator is a 2-layer MLP where the first layer contains 64 hidden dimensions and the second layer predicts the sensitive attribute of each node. Both the filter and the discriminator use leaky ReLU as the activation function. A multi-class logistic regression is applied on the `filtered' embeddings for node classification.
\item \textbf{REDRESS}~\cite{dong2021individual} ensures individual fairness of graph neural network (GNN) by optimizing the similarity between the ranking lists of model input and output. In our experiment, we set the backbone GNN model as the vanilla GCN model described above.
\end{itemize}

\begin{table}[htbp]
	\centering
	\caption{Additional effectiveness results for node classification on Chameleon dataset. Lower is better for bias (the gray column). Higher is better for accuracy (Acc., in percentage).}
	\vspace{-4mm}
	\label{tab:additional_effectiveness}
	\begin{tabular}{cca}
		\hline
		\multirow{2}{*}{\textbf{Method}} & \multicolumn{2}{c}{\textbf{Chameleon}} \\
		\cmidrule(){2-3}
		& \textbf{Acc.} & \textbf{Bias} \\
		\hline
		\textbf{GCN} & $60.09 \pm 1.047$ & $0.504 \pm 0.118$ \\
		\hline
		\textbf{DEMO-Net} & $63.77 \pm 0.955$ & $0.352 \pm 0.015$ \\
		\textbf{DSGCN} & $47.76 \pm 1.978$ & $0.129 \pm 0.019$ \\
		\textbf{Tail-GNN} & $58.73 \pm 1.794$ & $1.040 \pm 0.654$ \\
		\textbf{AdvFair} & $42.54 \pm 8.499$ & $0.276 \pm 0.194$ \\
		\textbf{REDRESS} & $23.77 \pm 2.745$ & $0.020 \pm 0.005$ \\
		\hline
		\textbf{\rawlsgcn-Graph (Ours)} & $50.61 \pm 0.526$ & $0.098 \pm 0.007$ \\
		\textbf{\rawlsgcn-Grad (Ours)} & $45.92 \pm 2.741$ & $0.138 \pm 0.062$ \\
		\hline
	\end{tabular}
	\vspace{-3mm}
\end{table}

\begin{table*}[ht]
	\centering
	\caption{Additional ablation study of different matrix normalization techniques. Lower is better for bias (i.e., the gray column). Higher is better for accuracy (Acc.).}
	\vspace{-4mm}
	\label{tab:additional_ablation}
	\begin{tabular}{cccacaca}
		\hline
		\multirow{2}{*}{\textbf{Method}} & \multirow{2}{*}{\textbf{Normalization}} & \multicolumn{2}{c}{\textbf{Cora-ML}} & \multicolumn{2}{c}{\textbf{Citeseer}} & \multicolumn{2}{c}{\textbf{Coauthor-CS}} \\
		\cmidrule(){3-8}
		& & \textbf{Acc.} & \textbf{Bias} & \textbf{Acc.} & \textbf{Bias} & \textbf{Acc.} & \textbf{Bias} \\
		\hline\hline
		\multirow{4}{*}{\rawlsgcn-Graph} 
		& \textbf{Row} 
		& $79.74 \pm 0.320$ & $0.098 \pm 0.004$ 
		& $69.18 \pm 0.595$ & $0.240 \pm 0.013$ 
		& $92.78 \pm 0.331$ & $0.052 \pm 0.002$ \\
		& \textbf{Column} 
		& $76.78 \pm 1.360$ & $0.260 \pm 0.330$  
		& $69.12 \pm 0.781$ & $0.243 \pm 0.093$ 
		& $92.44 \pm 0.609$ & $0.049 \pm 0.012$ \\
		& \textbf{Symmetric} 
		& $77.04 \pm 1.606$ & $0.109 \pm 0.015$ 
		& $69.20 \pm 0.735$ & $0.196 \pm 0.014$ 
		& $92.56 \pm 0.120$ & $0.042 \pm 0.001$ \\
		\cline{2-8}
		& \textbf{Doubly Stochastic} 
		& $76.98 \pm 1.098$ & $0.105 \pm 0.012$ 
		& $69.34 \pm 0.745$ & $0.196 \pm 0.013$ 
		& $92.52 \pm 0.264$ & $0.043 \pm 0.002$ \\
		\hline\hline
		\multirow{4}{*}{\rawlsgcn-Grad} 
		& \textbf{Row} 
		& $79.78 \pm 0.349$ & $0.230 \pm 0.017$ 
		& $68.64 \pm 0.215$ & $0.274 \pm 0.036$ 
		& $92.92 \pm 0.440$ & $0.069 \pm 0.006$ \\
		& \textbf{Column} 
		& $79.94 \pm 0.599$ & $0.253 \pm 0.077$ 
		& $68.48 \pm 0.204$ & $0.302 \pm 0.049$ 
		& $92.78 \pm 0.407$ & $0.058 \pm 0.006$ \\
		& \textbf{Symmetric} 
		& $79.68 \pm 0.458$ & $0.199 \pm 0.008$ 
		& $68.68 \pm 0.248$ & $0.286 \pm 0.042$ 
		& $93.00 \pm 0.341$ & $0.063 \pm 0.006$ \\
		\cline{2-8}
		& \textbf{Doubly Stochastic} 
		& $79.34 \pm 1.247$ & $0.232 \pm 0.065$ 
		& $68.81 \pm 0.462$ & $0.283 \pm 0.047$ 
		& $92.68 \pm 0.240$ & $0.058 \pm 0.007$ \\
		\hline
	\end{tabular}
	
	\begin{tabular}{cccaca}
		\hline
		\multirow{2}{*}{\textbf{Method}} & \multirow{2}{*}{\textbf{Normalization}} & \multicolumn{2}{c}{\textbf{Coauthor-Physics}} & \multicolumn{2}{c}{\textbf{Amazon-Computers}} \\
		\cmidrule(){3-6}
		& & \textbf{Acc.} & \textbf{Bias} & \textbf{Acc.} & \textbf{Bias} \\
		\hline\hline
		\multirow{4}{*}{\rawlsgcn-Graph} 
		& \textbf{Row} 
		& $94.36 \pm 0.488$ & $0.013 \pm 0.000$ 
		& $78.54 \pm 1.125$ & $0.092 \pm 0.013$ \\
		& \textbf{Column} 
		& $93.98 \pm 0.508$ & $0.016 \pm 0.003$ 
		& $78.18 \pm 4.354$ & $0.196 \pm 0.106$ \\
		& \textbf{Symmetric} 
		& $93.98 \pm 0.248$ & $0.016 \pm 0.000$ 
		& $80.22 \pm 0.803$ & $0.126 \pm 0.012$ \\
		\cline{2-6}
		& \textbf{Doubly Stochastic} 
		& $94.06 \pm 0.196$ & $0.016 \pm 0.000$ 
		& $80.16 \pm 0.859$ & $0.121 \pm 0.010$ \\
		\hline\hline
		\multirow{4}{*}{\rawlsgcn-Grad} 
		& \textbf{Row} 
		& $94.08 \pm 0.204$ & $0.027 \pm 0.001$ 
		& $63.46 \pm 1.376$ & $0.453 \pm 0.039$ \\
		& \textbf{Column} 
		& $94.26 \pm 0.294$ & $0.020 \pm 0.002$ 
		& $75.48 \pm 1.273$ & $0.218 \pm 0.033$ \\
		& \textbf{Symmetric} 
		& $94.30 \pm 0.346$ & $0.021 \pm 0.001$ 
		& $66.42 \pm 0.584$ & $0.353 \pm 0.021$ \\
		\cline{2-6}
		& \textbf{Doubly Stochastic} 
		& $94.18 \pm 0.306$ & $0.021 \pm 0.002$ 
		& $74.18 \pm 2.530$ & $0.195 \pm 0.029$ \\
		\hline
	\end{tabular}
	\vspace{-4mm}
\end{table*}

\subsection*{C -- Parameter Settings} 
In this section, we provide additional parameter settings for the purpose of reproducibility, including the settings for learning rate and training epochs of baseline methods. For \rawlsgcn-Graph, \rawlsgcn-Grad and Adversarial Fair GCN, we search the learning rate that achieves the highest average classification accuracy in the set of $\{0.075, 0.05, 0.025, 0.01, 0.0075, 0.005, 0.0025\}$. For DSGCN, due to its long running time, we search the learning rate in the set of $\{0.05, 0.025, 0.01, 0.005\}$. For GCN, DEMO-Net, Tail-GNN and REDRESS, we use the suggested hyperparameters (including learning rate, weight decay, number of epochs and early stopping conditions) in the released source code. In addition, for REDRESS, we search the best choice of $\alpha$ (see details in \cite{dong2021individual}) in the range of $\{10^{-5}, 10^{-4}, 10^{-3}, 10^{-2}, 10^{-1}\}$. For Adversarial Fair GCN, we train the model for $1000$ epochs with a patience of $200$ for early stopping. For DSGCN, due to long training time, we train the model for $200$ epochs without early stopping, which is consistent with the settings in GCN.

\subsection*{D -- Additional Results on Heterophilic Graph}
Different from datasets listed in Section~\ref{subsec:settings}, a heterophilic graph consists of linked nodes that are likely to have dissimilar features or different class labels. We conduct additional experiments on a commonly used heterophilic graph named Chameleon dataset~\cite{pei2019geom}. It contains $2,277$ nodes which are Wikipedia pages about chameleons. Each node has $2,325$ features which correspond to informative nouns in the Wikipedia pages. Two nodes are connected if they have mutual link(s) between two pages, which forms $36,101$ edges. We follow the same experimental settings in Section~\ref{subsec:settings} for (1) generating training/validation/test sets, (2) training model and (3) evaluating results. The experimental result on Chameleon dataset is listed in Table~\ref{tab:additional_effectiveness}. From the table, we have the following observations. (1) Though REDRESS achieves smaller bias than \rawlsgcn-Graph and \rawlsgcn-Grad, its accuracy is severely reduced compared to GCN; (2) Though DSGCN outperforms \rawlsgcn-Grad with slightly higher accuracy and smaller bias, it fails to outperform \rawlsgcn-Graph; (3) Except for these two cases, our proposed methods achieve the smallest bias compared with all other baseline methods. Overall, our methods still achieve the best trade-off between the accuracy and bias.

\subsection*{E -- Additional Ablation Study Results}
We provide additional ablation study on other datasets listed in Section~\ref{subsec:settings}. From the Table~\ref{tab:additional_ablation}, we observe that, although row normalization and symmetric normalization outperforms the doubly stochastic normalization in some cases, it can also increase the bias in other cases, e.g., Amazon-Computers and Amazon-Photo (shown in Table~\ref{tab:ablation} in Section~\ref{subsec:settings}). All in all, the doubly stochastic normalization is the best one that (1) consistently mitigates bias for both RawlsGCN-Graph and RawlGCN-Grad, and (2) achieves good balance between accuracy and fairness.

\end{document}